\documentclass[twoside,11pt,tablecaption=bottom]{jmlr}

\usepackage{times}

\usepackage{mathtools}
\usepackage{enumitem}
\newlist{inlistalph}{enumerate*}{1}
\setlist[inlistalph]{label=(\alph*)}

\usepackage[utf8]{inputenc}
\usepackage[english]{babel}
\usepackage{amsopn}
\usepackage{amsfonts}
\usepackage{mathrsfs}
\usepackage{xparse}
\usepackage{upgreek}
\usepackage{xcolor}

\usepackage{wasysym}
\usepackage{color}
\usepackage{tabu}

\usepackage{float}

\usepackage{verbatim}

\usepackage{adjustbox}


\newcommand*{\qedhere}{\hfill\BlackBox\\[2mm]}
\newcommand*{\noqed}{\renewcommand{\jmlrQED}{}}

\usepackage{tikz}
\usepackage{xspace}
\usetikzlibrary{shapes, fit, decorations.pathreplacing, shadows, arrows, calc, positioning}

\usepackage{pgfplots}
\usepgfplotslibrary{patchplots}

\newcommand*{\Txt}{\mathbf{Txt}}

\newcommand*{\G}{\mathbf{G}}

\newcommand*{\It}{\mathbf{It}}
\newcommand*{\Sd}{\mathbf{Sd}}
\newcommand*{\Psd}{\mathbf{Psd}}

\newcommand*{\Ex}{\mathbf{Ex}}
\newcommand*{\Bc}{\mathbf{Bc}}

\newcommand*{\Mon}{\mathbf{Mon}}
\newcommand*{\SMon}{\mathbf{SMon}}

\newcommand*{\T}{\mathbf{T}}

\newcommand*{\N}{\mathbb{N}}

\newcommand*{\La}{\mathcal{L}}

\newcommand*{\totalCp}{\mathcal{R}}
\newcommand*{\partialCp}{\mathcal{P}}


\newcommand{\dom}{\mathrm{dom}}
\newcommand{\range}{\mathrm{range}}
\newcommand{\content}{\mathrm{content}}

\newcommand{\first}{\mathrm{first}}

\newcommand{\ind}{\mathrm{ind}}
\newcommand{\pad}{\mathrm{pad}}
\newcommand{\unpad}{\mathrm{unpad}}

\newcommand{\join}{\mathrm{join}}

\newcommand{\ORT}{\textbf{ORT}\xspace}

\makeatletter
\newsavebox{\@brx}
\newcommand{\llangle}[1][]{\savebox{\@brx}{\(\m@th{#1\langle}\)}%
  \mathopen{\copy\@brx\kern-0.5\wd\@brx\usebox{\@brx}}}
\newcommand{\rrangle}[1][]{\savebox{\@brx}{\(\m@th{#1\rangle}\)}%
  \mathclose{\copy\@brx\kern-0.5\wd\@brx\usebox{\@brx}}}
\makeatother

\newcommand{\convs}{\mathclose{\hbox{$\downarrow$}}}
\newcommand{\divs}{\mathclose{\hbox{$\uparrow$}}}

\usetikzlibrary{fadings,through}

\newcommand*{\concat}{^\frown}

\newcommand*{\falls}{\text{if }}
\newcommand*{\sonst}{\text{otherwise.}}
\newcommand*{\sonstfalls}{\text{else, if }}

\newcommand*{\Lr}{\mathbf{LR}}

\makeatletter 
\g@addto@macro{\@algocf@init}{\SetKwInput{input}{Input}} 
\makeatother
\makeatletter 
\g@addto@macro{\@algocf@init}{\SetKwInput{outoutput}{Output}} 
\makeatother
\makeatletter 
\g@addto@macro{\@algocf@init}{\SetKwInput{param}{Parameter}} 
\makeatother
\makeatletter 
\g@addto@macro{\@algocf@init}{\SetKwInput{output}{Semantic Output}} 
\makeatother
\makeatletter 
\g@addto@macro{\@algocf@init}{\SetKwInput{init}{Initialization}} 
\makeatother

\newcommand{\itemin}[1]{\item[#1\hspace{-0.5cm}] \hspace{0.5cm}}

\jmlrvolume{}
\jmlryear{}
\jmlrsubmitted{}
\jmlrpublished{}
\jmlrworkshop{}
\jmlrpages{}
\jmlrproceedings{}{}

\title[Mapping Monotonic Restrictions in Inductive Inference]{Mapping Monotonic Restrictions in Inductive Inference}

\author{\Name{Vanja Dosko\v{c}} \Email{vanja.doskoc@hpi.de} \\ \Name{Timo K\"{o}tzing} \Email{timo.koetzing@hpi.de} \\ \addr Hasso Plattner Institute \\ University of Potsdam, Germany}

\pgfplotsset{compat=1.16}

\begin{document}

\maketitle

\begin{abstract}
  In \emph{language learning in the limit} we investigate computable devices (learners) learning formal languages. Through the years, many natural restrictions have been imposed on the studied learners. As such, \emph{monotonic} restrictions always enjoyed particular attention as, although being a natural requirement, monotonic learners show significantly diverse behaviour when studied in different settings. A recent study thoroughly analysed the learning capabilities of \emph{strongly monotone} learners imposed with memory restrictions and various additional requirements. The unveiled differences between \emph{explanatory} and \emph{behaviourally correct} such learners motivate our studies of \emph{monotone} learners dealing with the same restrictions.

  We reveal differences and similarities between monotone learners and their strongly monotone counterpart when studied with various additional restrictions. In particular, we show that explanatory monotone learners, although known to be strictly stronger, do (almost) preserve the pairwise relation as seen in strongly monotone learning. Contrasting this similarity, we find substantial differences when studying behaviourally correct monotone learners. Most notably, we show that monotone learners, as opposed to their strongly monotone counterpart, do heavily rely on the order the information is given in, an unusual result for behaviourally correct learners.
\end{abstract}

\begin{keywords}
  language learning in the limit, inductive inference, behaviourally correct learning, explanatory learning, monotone learning
\end{keywords}

\section{Introduction}

Algorithmically learning a formal language from a growing but finite amount of its positive information is widely referred to as \emph{inductive inference} or \emph{language learning in the limit}, a branch of (algorithmic) learning theory. In his seminal paper, \citet{Gold67} introduced this setting as follows. A learner $h$ (a computable device) is successively presented all and only the information from a formal language $L$ (a computably enumerable subset of the natural numbers). Such a list of elements of $L$ is called a \emph{text} of $L$. With each new datum, the learner $h$ makes a guess (a description for a computably enumerable set) about which language it is presented using the information shown to it. Once these guesses converge to a single, correct hypothesis explaining the language, the learner successfully \emph{learned} the language $L$ on this text. If $h$ learns $L$ on every text, we say that $h$ \emph{learns} $L$.

This is referred to as \emph{explanatory learning} as the learner, in the limit, provides an explanation of the presented language. We denote this as $\Txt\G\Ex$, where $\Txt$ indicates that the information is given from text, $\G$ stands for \emph{Gold-style} learning, where the learner has \emph{full information} on the elements presented to make its guess, and, lastly, $\Ex$ refers to explanatory learning. Since a single language can be learned by a learner which always guesses this language, we study classes of languages which can be $\Txt\G\Ex$-learned by a single learner and denote the set of all such classes with $[\Txt\G\Ex]$. We refer to this set as \emph{learning power} of $\Txt\G\Ex$-learners.

Inspired from naturally desirable learning behaviour (for example, see \citet{Angluin80} for \emph{conservative} or \citet{OSW82} for \emph{decisive} learning) as well as behaviour witnessed in other sciences, such as psychology (where \citet{PsychUShape} inspired \emph{non-U-shaped} learning \citep{BCMSW08}), various adaptations of $\Txt\G\Ex$-learning have been proposed in the literature. These may affect the amount of information given to the learner, the behaviour the learner may demonstrate as well as the success criterion itself. In this paper, we focus on \emph{monotonic restrictions} as introduced by \citet{Jantke91} and \citet{Wiehagen91}. In its strongest form this natural restriction requires the learner to make monotone guesses, that is, each guess must contain all elements present in previous guesses. This is referred to as \emph{strongly monotone} learning and abbreviated as $\SMon$. We focus on an variant of this criterion, namely \emph{monotone} learning ($\Mon$), where the learner has to be strongly monotone regarding the elements of the target language. 

Monotonic restrictions have been given special attention in the literature. Initially introduced when learning total computable functions, monotone restrictions quickly gained attention when inferring formal languages. \citet{LZ93} intensively studied monotonic learners when given the task of learning indexed families of languages \citep{Angluin80}. Additionally to learning from text, they considered learning from informants, where the learner is provided both positive and negative data to infer its hypotheses from. In both settings, strongly monotone learners are strictly less powerful than their monotone counterpart, which, in turn, are known to be weaker than unrestricted learners. Monotonic learning restrictions formalise the idea of \emph{learning by generalization}, that is, given more and more data the learner produces better and better generalizations of the given information eventually to infer the target language. Looking at this the other way around, \citet{LZK96} consider dual-monotonic learners where learning is achieved by \emph{specialization}. Following this paradigm, the learner, instead of eventually overgeneralizing, produces specializations that fit the target language better and better. One transitions the monotonic learning restrictions to this dual concept by requiring monotonicity on the complement of the hypotheses and target languages. Particularly interesting results show that strongly monotonic learners are strictly stronger regarding learning power than their dual counterpart, while monotone and dual-monotone learners are incomparable to each other.

Surprising results are also obtained when studying monotone learning of indexed families when imposing memory restrictions on the learners. \citet{LZ93} show that, depending on the hypothesis space chosen for the learning task, strongly monotone and monotone learners cope differently with this loss of memory. For example, strongly monotone learners, in any considered case, can be assumed to build their hypotheses solely on the content of the information given. Introduced by \citet{WC80}, such learners are called \emph{set-driven} ($\Sd$). On the other hand, only when the hypothesis space may be chosen freely, monotone learners may neglect the order in which the information was presented to them, that is, they may be assumed \emph{partially set-driven} ($\Psd$), see \citet{BlumBlum75} and \citet{SchRicht84}. Transferring this to learning of arbitrary classes of languages, \citet{KS16} recently studied the behaviour of strongly monotone learners imposed with various memory restrictions. Besides comparing the learning power of \emph{partial} and \emph{total} learners, they also investigate what happens if one requires the restriction to hold \emph{globally}. A key result of \citet{KS16} shows that explanatory strongly monotone learners do not cope well with memory restrictions. In particular, they prove partially set-driven learners to lack full learning power and even more so set-driven learners. However, when requiring the learners to be \emph{behaviourally correct} ($\Bc$), see \citet{CL82} and \citet{OW82}, that is, for correct identification the learner may make finitely many wrong guesses before settling to correct but possibly different conjectures, strongly monotone learners do not require more than the content of the information given and even may exhibit their restriction globally without losing learning power.

\citet{KS16} provide these results within a lucid map, depicted in Figure~\ref{FigSMon}. A picture, which we show to be considerably different when dealing with monotone learners. We provide our results structured as follows. In Section~\ref{Sec:MonEx}, we study the monotonic restrictions of interest when requiring syntactic convergence. In particular, we observe that the overall behaviour of monotone learners resembles the one of strongly monotone learners. This similarity culminates in Theorem~\ref{thm:MonSMon}, where we prove globally monotone learners to be equal to globally strongly monotone ones. We additionally observe that most proof strategies from the strongly monotone case, see \citet{KS16}, can be carried over to fit monotone learners. While these transitions are often non-trivial, they do indicate a deep similarity between these two restrictions. Gathering the results throughout this section, we obtain the map shown in Figure~\ref{FigMonEx} depicting the overall picture of the various discussed monotonic learning restrictions.

\begin{figure}[h]
  \centering
  \begin{adjustbox}{width=0.90\textwidth}
    \includegraphics{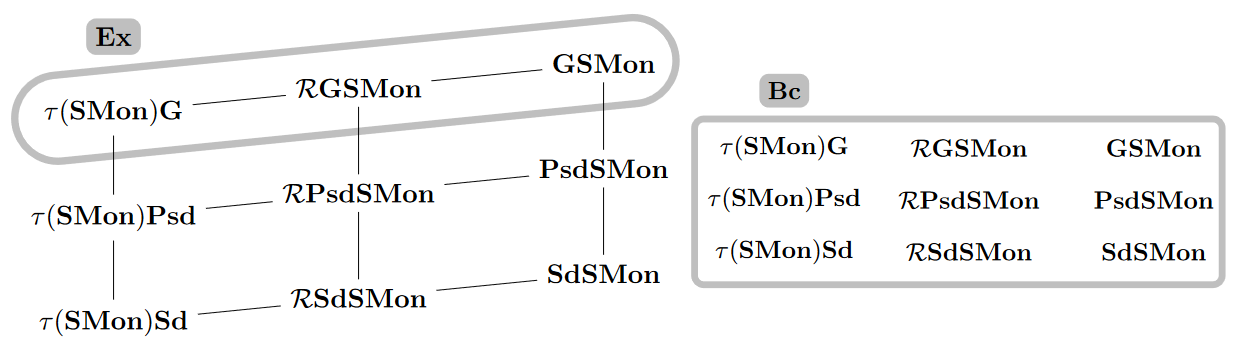}
  \end{adjustbox}
  \caption{Relation of various strongly monotone learning restrictions \citep{KS16}. On the left-hand side we see the explanatory setting ($\Ex$), on the right-hand side the behaviourally correct one ($\Bc$). We omit mentioning $\Txt$ in favour of readability. We write $\totalCp$ when requiring the learner to be total and $\tau(\SMon)$ when requiring the learner to be globally strongly monotone. Black solid lines imply trivial inclusions (bottom-to-top, left-to-right), which we omit drawing in the $\Bc$-case. Furthermore, greyly edged areas illustrate a collapse of the enclosed learning criteria and there are no further collapses.}\label{FigSMon}
\end{figure}

\begin{figure}[h]
  \centering
  \begin{adjustbox}{width=0.70\textwidth}
    \includegraphics{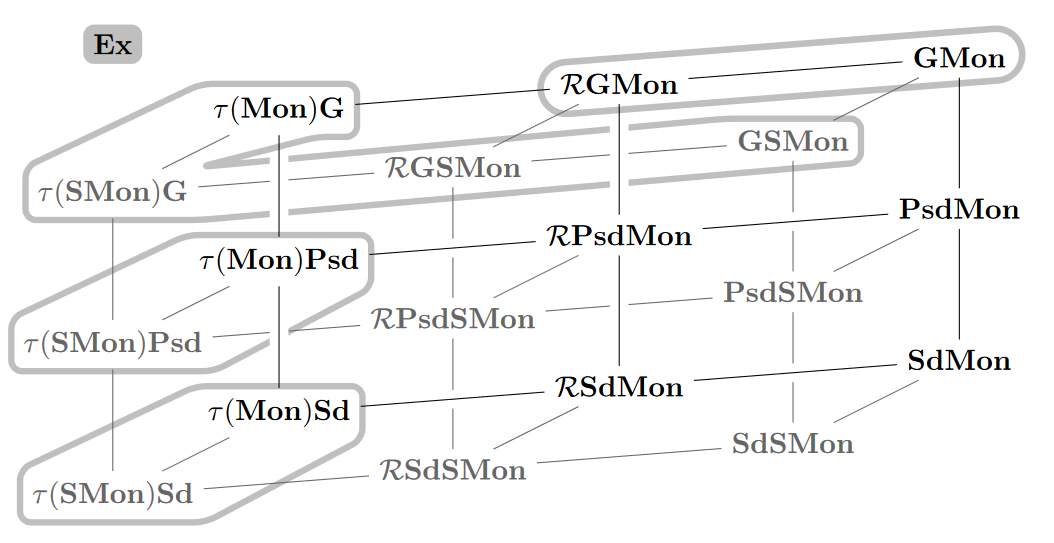}
  \end{adjustbox}
  \caption{Relation of various monotonic learning restrictions in the explanatory case ($\Ex$). We omit mentioning $\Txt$ to favour readability. Furthermore, $\totalCp$ indicates that the considered learners are required to be total and, given a learning restriction $\Lr$, $\tau(\Lr)$ indicates that the restriction $\Lr$ is to hold globally. Solid lines imply trivial inclusions (bottom-to-top, left-to-right). Greyly edged areas illustrate a collapse of the enclosed learning criteria. There are no further collapses.}\label{FigMonEx}
\end{figure}

In Section~\ref{Sec:MonBc}, we transfer this problem to behaviourally correct learners, that is, learners which are required to converge semantically, and discover an unexpected result. In Theorem~\ref{Thm:CoolSep}, we show that Gold-style monotone learners are strictly more powerful than their partially set-driven counterpart. This is particularly surprising as usually behaviourally correct learners cope rather well with such memory restrictions, see for example \citet{KS16} or \citet{DoskocK20}. This marks the most important and surprising insight of this work. In Figure~\ref{FigMonBc}, we collect our findings. Finally, we conclude our work in Section~\ref{Sec:Concl}.

\begin{figure}[h]
  \centering
  \begin{adjustbox}{width=0.45\textwidth}
    \includegraphics{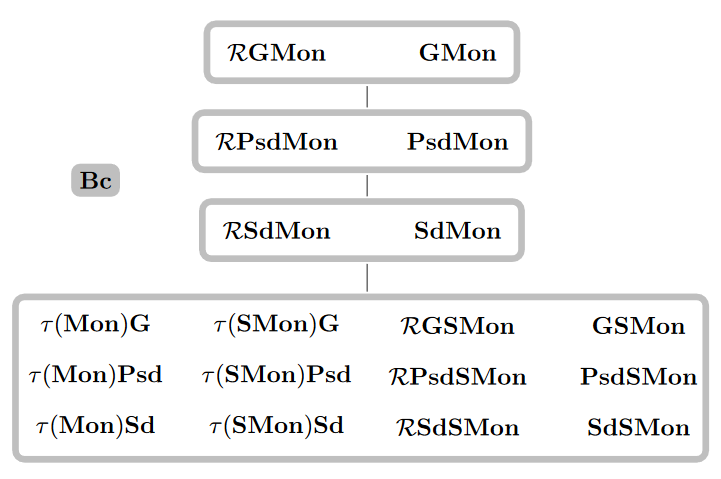}
  \end{adjustbox}
  \caption{Relation of various monotonic learning restrictions in the behaviourally correct case ($\Bc$). To ease readability, we omit mentioning $\Txt$. Additionally, $\totalCp$ indicates that the learner needs to be total, while, given a learning restriction $\Lr$, $\tau(\Lr)$ indicates that $\Lr$ needs to hold globally. Black solid lines imply inclusions (bottom-to-top), while greyly edged areas illustrate a collapse of the enclosed learning criteria.}\label{FigMonBc}
\end{figure}

\section{Language Learning in the Limit}

In this section we introduce notation and preliminary results used throughout this paper. We consider basic computability theory as known, for an overview we refer the reader to \citet{Rogers87}. Furthermore, we follow the system for learning criteria introduced by \citet{Kotzing09}.

\subsection{Preliminaries}

Starting with the mathematical notation, we use $\subsetneq$ and $\subseteq$ to denote the proper subset and subset relation between sets, respectively. We denote with $\N = \{ 0, 1, 2, \dots \}$ the set of all natural numbers. Furthermore, we let $\partialCp$ and $\totalCp$ be the set of all partial and total computable functions $p\colon \N \to \N$. We fix an effective numbering $\{\varphi_e\}_{e \in \N}$ of $\partialCp$ and denote with $W_e = \dom(\varphi_e)$ the $e$-th computably enumerable set. This way, we interpret the natural number $e$ as an \emph{index} or \emph{hypothesis} for the set~$W_e$. We mention important computable functions. Firstly, we fix with $\langle . , . \rangle$ a computable coding function. We use $\pi_1$ and $\pi_2$ to recover the first and second component, respectively. Furthermore, we write $\pad$ for an injective computable function such that, for all $e, k \in \N$, we have $W_e = W_{\pad(e, k)}$. We use $\unpad_1$ and $\unpad_2$ to compute the first and second component, respectively. Note that both functions can be extended iteratively to more coordinates. Lastly, we let $\ind$ compute an index for any given finite set.

We aim to learn \emph{languages}, that is, recursively enumerable sets $L \subseteq \N$. These will be learned by \emph{learners} which are partial computable functions. By $\#$ we denote the \emph{pause symbol} and for any set $S$ we denote $S_\# \coloneqq S \cup \{ \# \}$. Furthermore, a \emph{text} is a total function $T \colon \N \to \N \cup \{ \# \}$, the collection of all texts we denote with $\Txt$. For any text or sequence $T$, we let $\content(T) \coloneqq \range(T) \setminus \{ \# \}$ be the \emph{content} of $T$. A text of a language $L$ is such that $\content(T) = L$, the collection of all texts of $L$ we denote with $\Txt(L)$. For $n \in \N$, we denote by $T[n]$ the initial sequence of $T$ of length $n$, that is, $T[0] \coloneqq \varepsilon$ and $T[n] \coloneqq (T(0), T(1), \dots, T(n-1))$. For a set $S$, we call the text where all elements of $S$ are presented in strictly increasing order (followed by infinitely many pause symbols if $S$ is finite) the \emph{canonical text of $S$}. Furthermore, we call the sequence of all elements of $S$ presented in strictly ascending order the \emph{canonical sequences of $S$}. On finite sequences we use $\subseteq$ to denote the \emph{extension relation} and $\leq$ to denote the order on sequences interpreted as natural numbers. Furthermore, for tuples of finite sets and numbers $(D, t)$ and $(D',t')$, we define the order $\preceq$ such that $(D,t) \preceq (D',t')$ if and only if $t \leq t'$ and there exists a text $T$ such that $D = \content(T[t])$ and $D' = \content(T[t'])$. In addition, given two sequences $\sigma$ and $\tau$ we write $\sigma\concat\tau$ to denote the concatenation of these. Occasionally, we omit writing $\concat$ for readability. 

Next, we formalize learning criteria, following the system proposed by \citet{Kotzing09}. An \emph{interaction operator} $\beta$ takes a learner $h \in \partialCp$ and a text $T \in \Txt$ as argument and outputs a possibly partial function $p$. Intuitively, $\beta$ provides the information for the learner to make its guesses. We consider the interaction operators $\G$ for \emph{Gold-style} or \emph{full-information} learning \citep{Gold67}, $\Psd$ for \emph{partially set-driven} learning \citep{BlumBlum75,SchRicht84} and $\Sd$ for \emph{set-driven} learning \citep{WC80}. Define, for any $i \in \N$,
\begin{align*}
  \G(h, T)(i) &\coloneqq h(T[i]), \\
  \Psd(h, T)(i) &\coloneqq h(\content(T[i]), i), \\
  \Sd(h, T)(i) &\coloneqq h(\content(T[i])).
\end{align*}
Intuitively, Gold-style learners have full information on the elements presented to them. Partially set-driven learners, however, base their guesses on the total amount of elements presented and the content thereof. Lastly, set-driven learners only base their conjectures on the content given to them. Furthermore, for any $\beta$-learner $h$, we write $h^*$ for its starred learner, that is, the $\G$-learner to simulate $h$. For example, if $\beta = \Sd$, then, for any sequence $\sigma$, $h^*(\sigma) = h(\content(\sigma))$. 

When it comes to learning, we can distinguish between various criteria for successful learning. Initially, \citet{Gold67} introduced \emph{explanatory} learning ($\Ex$) as such a learning criterion. Here, a learner is expected to converge to a single, correct hypothesis in order to learn a language. This can be loosened to require the learner to converge semantically, that is, from some point onwards it must output correct hypotheses which may change syntactically \citep{CL82,OW82}. This is referred to as \emph{behaviourally correct} learning ($\Bc$). Formally, a \emph{learning restriction} $\delta$ is a predicate on a total learning sequence $p$, that is, a total function, and a text $T \in \Txt$. For the mentioned criteria we have
\begin{align*}
  \Ex(p, T) &:\Leftrightarrow \exists n_0 \forall n \geq n_0 \colon p(n) = p(n_0) \wedge W_{p(n_0)} = \content(T), \\
  \Bc(p, T) &:\Leftrightarrow \exists n_0 \forall n \geq n_0 \colon W_{p(n)} = \content(T). 
\end{align*}
These success criteria can be expanded in order to model natural learning restrictions or such found in other sciences, say, for example, psychology. In this paper, we focus on natural learning restrictions which incorporate some kind of monotonic behaviour, introduced by \citet{Jantke91} and \citet{Wiehagen91}. \emph{Strongly monotone} learning ($\SMon$) forms the basis. Here, the learner may never discard elements which were once present in its previous hypotheses. This restrictive criterion can be loosened to hold only on the elements of the target language, that is, the learner may never discard elements from the language which it already proposed in previous hypotheses. This is referred to as \emph{monotone} learning ($\Mon$). This is formalized as
\begin{align*}
  \SMon(p, T) &:\Leftrightarrow \forall n, m \colon n \leq m \Rightarrow W_{p(n)} \subseteq W_{p(m)}, \\
  \Mon(p, T) &:\Leftrightarrow \forall n, m \colon n \leq m \Rightarrow W_{p(n)} \cap \content(T) \subseteq W_{p(m)} \cap \content(T).
\end{align*}
Finally, $\T$, the always true predicate, denotes the absence of a restriction.

Now, a \emph{learning criterion} is a tuple $(\alpha, \mathcal{C}, \beta, \delta)$, where $\mathcal{C}$ is a set of admissible learners, typically $\partialCp$ or $\totalCp$, $\beta$ is an interaction operator and $\alpha$ and $\delta$ are learning restrictions. We denote this learning criterion as $\tau(\alpha)\mathcal{C}\Txt\beta\delta$. In the case of $\mathcal{C} = \partialCp$, $\alpha= \T$ or $\delta=\T$ we omit writing the respective symbol. For an admissible learner $h \in \mathcal{C}$ we say that $h$ $\tau(\alpha)\mathcal{C}\Txt\beta\delta$-learns a language $L$ if and only if on arbitrary text $T \in \Txt$ we have $\alpha(\beta(h,T),T)$ and on texts of the target language $T \in \Txt(L)$ we have $\delta(\beta(h,T),T)$. With $\tau(\alpha)\mathcal{C}\Txt\beta\delta(h)$ we denote the class of languages $\tau(\alpha)\mathcal{C}\Txt\beta\delta$-learned by $h$ and the set of all such classes we denote with $[\tau(\alpha)\mathcal{C}\Txt\beta\delta]$.

\subsection{Normal Forms in Inductive Inference}

The introduced learning restrictions all fall into the scope of delayable restrictions. Informally, the hypotheses of a delayable restriction may be postponed arbitrarily but not indefinitely. Formally, we call a learning restriction $\delta$ \emph{delayable} if and only if for all texts $T$ and $T'$ with $\content(T) = \content(T')$, all learning sequences $p$ and all total, unbounded non-decreasing functions $r$, we have that if $\delta(p,T)$ and, for all $n$, $\content(T[r(n)]) \subseteq \content(T'[n])$, then $\delta( p \circ r, T')$. Furthermore, we call a restriction \emph{semantic} if and only if for any learning sequences $p$ and $p'$ and any text $T$, we have that if $\delta(p,T)$ and, for all $n$, $W_{p(n)} = W_{p'(n)}$ implies $\delta(p',T)$. Intuitively, a restriction is semantic if any hypothesis could be replaced by a semantically equivalent one without violating the learning restriction. In particular, one can provide general results when talking about delayable or semantic restrictions. The following theorem holds.
\begin{theorem}[\normalfont{\cite{KP16}; \cite{KSS17}}]
  For all interaction operators $\beta$, all delayable restrictions $\delta$ and all semantic restrictions $\delta'$, we have that \label{Thm:DelTotal}
  \begin{align*}
    [\totalCp\Txt\G\delta] &= [\Txt\G\delta], \\ 
    [\totalCp\Txt\beta\delta'] &= [\Txt\beta\delta'].
  \end{align*}
\end{theorem}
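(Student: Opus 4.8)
The two left-to-right inclusions are immediate: every total learner is in particular a partial computable one, and (since $\mathcal{C}=\partialCp$ is the omitted default) this gives $[\totalCp\Txt\G\delta]\subseteq[\Txt\G\delta]$ and $[\totalCp\Txt\beta\delta']\subseteq[\Txt\beta\delta']$. So the content lies in the reverse inclusions, and I treat them separately because they exploit the two different hypotheses on the restriction.

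For the first equality, fix a partial $h$ that $\Txt\G\delta$-learns a class $\mathcal{L}$ with $\delta$ delayable; I may assume $\mathcal{L}\neq\emptyset$ (else the claim is trivial), so $h(\varepsilon)$ is defined, say $h(\varepsilon)\convs v_0$ within $c_0$ steps. The plan is to totalise $h$ by \emph{delaying}. On a sequence $\sigma$ of length $t$, let $m(\sigma)$ be the largest $k\le t$ such that $h(\sigma[j])$ halts within $t$ steps for all $j\le k$, and set $h'(\sigma)\coloneqq h(\sigma[m(\sigma)])$ if this is defined and $h'(\sigma)\coloneqq v_0$ otherwise; since $\sigma[0]=\varepsilon$ for every $\sigma$, the latter case arises only while $t<c_0$, so $h'$ is total computable. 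For a text $T\in\Txt(L)$ with $L\in\mathcal{L}$, the sequence $p\coloneqq\G(h,T)$ is total and $\delta(p,T)$ holds; moreover $m(T[\cdot])$ is non-decreasing (a witness $k$ that halts within $n$ steps also halts within $n+1$ steps) and tends to infinity (for fixed $k$ the step budget eventually exceeds the finitely many halting times of $h(T[0]),\dots,h(T[k])$). Put $r(n)\coloneqq m(T[n])$ for $n\ge c_0$ and $r(n)\coloneqq 0$ otherwise; then $r$ is total, non-decreasing and unbounded, satisfies $r(n)\le n$ and hence $\content(T[r(n)])\subseteq\content(T[n])$, and one checks $\G(h',T)=p\circ r$ holds exactly, including on the initial segment where both sides equal $v_0$. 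Delayability (with $T'=T$) then yields $\delta(p\circ r,T)=\delta(\G(h',T),T)$, so $h'$ witnesses $\mathcal{L}\in[\totalCp\Txt\G\delta]$.

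For the second equality, fix a partial $h$ that $\Txt\beta\delta'$-learns $\mathcal{L}$ with $\delta'$ semantic. Here I exploit that a \emph{total} learner must only return an index on every input---the enumeration of that index may well diverge. By the s-m-n theorem there is a total computable $s$ with $W_{s(x)}=W_{h(x)}$ whenever $h(x)\convs$ and $W_{s(x)}=\emptyset$ otherwise (on input $y$, the program $s(x)$ simulates $h(x)$ and, if and when it halts with value $e$, enumerates $y$ exactly when $y\in W_e$). Letting $h'$ apply $s$ to whatever argument $\beta$ feeds to $h$ (so $h'(\content(\sigma))\coloneqq s(\content(\sigma))$ for $\beta=\Sd$, analogously with the counter for $\Psd$ and with the full sequence for $\G$) makes $h'$ total, so $\beta(h',T)$ is total for every $T$. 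On a text $T\in\Txt(L)$ with $L\in\mathcal{L}$ every value $\beta(h,T)(i)$ converges, whence $W_{\beta(h',T)(i)}=W_{\beta(h,T)(i)}$ for all $i$; since $\delta'$ is semantic, $\delta'(\beta(h,T),T)$ transfers to $\delta'(\beta(h',T),T)$, giving $\mathcal{L}\in[\totalCp\Txt\beta\delta']$.

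The routine parts are verifying totality of $h'$ and the monotonicity and unboundedness of the bookkeeping functions $m$ and $r$. The step that needs genuine care is matching the delayed run of $h'$ to the delayability clause \emph{verbatim}: that clause only permits composing the original sequence $p$ with a reparametrisation $r$ and never lets one splice in a foreign hypothesis, so the finitely many ``waiting'' outputs of $h'$ at the start must themselves be values of $p$. This is exactly what hardcoding $h(\varepsilon)=v_0$ secures, and it is why the argument needs $\mathcal{L}\neq\emptyset$; I expect this alignment, rather than the construction itself, to be the main subtlety. In the semantic case the corresponding insight is that totality of a learner does not require decidability of its conjectures, which is precisely what licenses the s-m-n uniformisation.
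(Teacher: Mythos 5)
Your proposal is correct. Note that the paper does not prove this theorem at all---it imports it from \citet{KP16} and \citet{KSS17}---and your two constructions are essentially the standard proofs from those sources: for the Gold-style/delayable case, delayed simulation (output $h$ on the longest prefix halting within $|\sigma|$ steps, with $v_0=h(\varepsilon)$ hardcoded so that the run of $h'$ is \emph{literally} $p\circ r$ for a total, non-decreasing, unbounded $r$ with $r(n)\le n$, which is exactly what the delayability clause with $T'=T$ consumes), and for the semantic case, s-m-n patching that sends divergent conjectures to indices of $\emptyset$ and leaves all conjectures on texts of learned languages semantically unchanged. Your bookkeeping (totality of $h'$, monotonicity and unboundedness of $m$, totality of $\beta(h,T)$ on texts of target languages) is sound, as is the observation that the empty class must be handled separately so that $h(\varepsilon)\convs$ can be assumed.

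One caveat on scope: your second argument presupposes that $\beta(h,T)(i)$ has the form $h(g(T,i))$ with $g$ independent of the learner, which holds for $\G$, $\Psd$ and $\Sd$ (the only operators this paper uses), but not for every interaction operator in the abstract sense of the paper's definition. For feedback-style operators such as iterative learning ($\It$), where the learner's previous conjecture is part of its next input, replacing $h(x)$ by $s(x)$ derails the simulated run, and the general result of \citet{KSS17} requires hypotheses that additionally encode the processing history. Since the theorem as stated quantifies over all interaction operators, your proof strictly covers only the instances needed in this paper.
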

This theorem is particularly useful for us as all mentioned restrictions are delayable and all except for $\Ex$ are semantic.

\section{Studying Monotone Learning Restrictions}\label{Sec:Mon}

In this section we discuss monotone learners under various additional restrictions and also compare them to their strongly monotone counterpart. We split this study into two parts, distinguishing between different convergence criteria. Firstly, we study explanatory such learners, that is, learners which converge syntactically. After that, we investigate learners which converge semantically, that is, behaviourally correct learners.

Before we dive into the respective part, we mention the following result. It is a well-established fact that strongly monotone learners are significantly weaker than their monotone counterpart. In particular, the class $\La = \{ 2\N \} \cup \{ \{ 0, 2, 4, \dots, 2k, 2k +1 \} \mid k \in \N \}$ is learnable by a $\Txt\Sd\Mon\Ex$-learner, however, any $\Txt\G\SMon\Bc$-learner fails to do so. We remark that the separating class can also be learned by a total monotone learner. For completeness, we provide the proof. 

\begin{theorem}
  We have that $[\totalCp\Txt\Sd\Mon\Ex] \setminus [\Txt\G\SMon\Bc] \neq \emptyset$.\label{thm:MonvsSMon}
\end{theorem}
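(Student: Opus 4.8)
The plan is to establish the two memberships separately, using the indicated class $\La = \{2\N\} \cup \{\{0,2,\dots,2k,2k+1\} \mid k \in \N\}$; throughout write $L_k := \{0,2,\dots,2k,2k+1\}$, noting that each language in $\La$ contains at most one odd number.

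For $\La \in [\totalCp\Txt\Sd\Mon\Ex]$ I would exhibit a total set-driven learner $h$ reading only the content $D$ of the data: if $D$ contains an odd number $2k+1$, output a computable index for $L_k$ (using $\ind$); otherwise output a fixed index for $2\N$. This is patently total, computable, and set-driven. On a text for $2\N$ the learner always guesses $2\N$, and on a text for $L_k$ the only odd datum ever presented is $2k+1$, so $h$ guesses $2\N$ while only evens have appeared and then switches permanently to $L_k$, giving syntactic ($\Ex$) convergence in both cases. For $\Mon$ I intersect the guessed sets with the target: on an $L_k$-text these intersections pass from $2\N \cap L_k = \{0,\dots,2k\}$ up to $L_k \cap L_k = L_k$, which is non-decreasing, and on a $2\N$-text the intersection is constantly $2\N$; hence the monotone predicate holds on all target texts.

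For $\La \notin [\Txt\G\SMon\Bc]$ I would argue by contradiction, assuming such a learner $h$ exists. The crucial first step is to observe that on any text $T$ for $2\N$ every hypothesis is a subset of $2\N$: $\Bc$ yields an $n_0$ with $W_{h(T[m])} = 2\N$ for $m \ge n_0$, and $\SMon$ forces each earlier guess to be contained in $W_{h(T[n_0])} = 2\N$. Taking $T$ to be the canonical text of $2\N$, fix a stage $n$ with $W_{h(T[n])} = 2\N$, let $\max\content(T[n]) \le 2j$, and pick $k > j$. Now build a text $T'$ of $L_k$ that agrees with $T$ on its first $n$ positions (legitimate since $\content(T[n]) \subseteq \{0,\dots,2k\} \subseteq L_k$) and thereafter enumerates the remaining elements of $L_k$. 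Because $T'[n] = T[n]$, the Gold-style learner gives $W_{h(T'[n])} = 2\N$; but $T'$ is a text of $L_k \in \La$, so $\SMon$ forces $2\N = W_{h(T'[n])} \subseteq W_{h(T'[m])}$ for all $m \ge n$, whence every such guess contains $2k+2 \notin L_k$ and never equals $L_k$, contradicting $\Bc$-learning of $L_k$.

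The hard part will be getting the joint use of $\SMon$ and $\Bc$ in the negative direction exactly right: one must use $\Bc$ on the $2\N$-text to secure a \emph{finite} stage whose hypothesis already enumerates all of $2\N$, and then exploit that this finite prefix is shared with a text for the finite language $L_k$, whose target is a proper subset of $2\N$. Here the monotonicity constraint, which aided learning in the positive part, instead forces the learner to keep over-generating on the $L_k$-text; the delicate point is choosing $k$ large enough relative to $n$ that $2\N \setminus L_k$ is nonempty and supplies the witness $2k+2$. The remaining matters — totality and computability of the set-driven learner and the validity of the constructed text $T'$ — are routine.
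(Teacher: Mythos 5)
Your proposal is correct and follows essentially the same route as the paper's own proof: the same separating class $\{2\N\} \cup \{\{0,2,\dots,2k,2k+1\} \mid k \in \N\}$, the same total set-driven monotone learner keyed to the unique odd datum, and the same contradiction in which $\Bc$-convergence on a text of $2\N$ yields a finite prefix whose hypothesis is $2\N$, which is then extended to a text of a finite language $L_k$ with $k$ large so that $\SMon$ forces permanent over-generalization. Your write-up is in fact slightly more explicit than the paper's (naming the witness $2k+2$ and spelling out why $\SMon$ propagates $2\N$ into all later guesses); the only cosmetic slips are writing $\{0,\dots,2k\}$ for the even numbers $2\N \cap L_k$ and leaving the learner's tie-break on sets with several odd elements unspecified, neither of which affects correctness.
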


\begin{proof}
  This is a standard proof and we include it for completeness. We adapt the proof from \citet[Thm.~22]{KP16} to also hold for total computable $\Mon$-learners. For $k \in \N$, let $L_{2k+1} = \{ 0, 2, 4, \dots, 2k, 2k+1 \}$ and let $\La = \{ 2\N \} \cup \{ L_{2k+1} \mid k \in \N \}$. Let $e$ be such that $W_e = 2 \N$ and, using the S-m-n Theorem, let $p \in \totalCp$ be such that, for all $k \in \N$, we have $W_{p(2k+1)} = L_{2k+1}$. 

  First, we show that $\La \subseteq \totalCp\Txt\Sd\Mon\Ex(h)$ for the following learner $h$. For all finite sets $D \subseteq \N$ let
  \begin{align*}
    h(D) \coloneqq \begin{cases} e, &\falls D \subseteq 2\N, \\ p(\min(D \setminus 2\N)), &\sonst \end{cases}
  \end{align*}
  First note that $h$ is total computable. Intuitively, while $h$ is presented only even elements, its hypothesis will be $e$, that is, a code for the set of all even numbers. This is the correct behaviour when learning $2\N$. Once it sees an odd element $2k+1$, it changes its hypothesis to a code of $L_{2k+1}$ and never changes its mind again. This is the correct learning behaviour for the language $L_{2k+1}$. Note that this mind change preserves monotonicity. 

  Now assume that there exists a learner $h'$ such that $\La \subseteq \Txt\G\SMon\Bc(h')$. Let $T$ be a text of $2\N$ and let $n_0$ such that, for all $n \geq n_0$, $W_{h'(T[n])} = 2\N$. Let $k$ be such that $\max(\content(T[n_0])) \leq 2k+1$. Then, let $T'$ be a text for $L_{2k+1}$ starting with $T[n_0]$. As $2\N \not\subseteq L_{2k+1}$ we have that $h'$ is either not strongly monotone or does not learn $L_{2k+1}$ from text $T'$ correctly.
\end{proof}

Although monotone learners are considerably more powerful than their strongly monotone counterpart, in Section~\ref{Sec:MonEx}, we observe similarities between explanatory such learners. These similarities are not only reflected by the resulting overall picture, but also by the means of obtaining these results. Thereby, the main difficulty is to reason why the elements used to contradict strongly monotone learning suddenly are part of a learnable language and, thus, also contradict monotone learning. Furthermore, in order to show strong results, all of these adaptations have to be done while maintaining the original learnability by some strongly monotone learner. 

Additionally, these similarities culminate in Theorem~\ref{thm:MonSMon}, where we show globally monotone learners to be equally strong as globally strongly monotone ones. This result also holds true when requiring semantic convergence. However, as monotone learners may discard elements from their guesses, the strategy of keeping all once suggested elements (regardless of the order), presented by \citet{KS16} when studying strongly monotone learners, does not work for such learners. In Theorem~\ref{Thm:CoolSep}, we show that partially set-driven learners are strictly less powerful than their Gold-style counterpart, an unusual result as we discuss in Section~\ref{Sec:MonBc}.

\subsection{Explanatory Monotone Learning} \label{Sec:MonEx}

Here, we investigate monotone learners when requiring syntactic convergence and also compare them to their strongly monotone counterpart. We build our investigations on the work of \citet{KS16}, who conduct a thorough discussion of strongly monotone learners. We show that the general behaviour of both types of learners is alike. This can be seen, firstly, in the resulting overall picture and, secondly, in the way these results are obtained. 

Most notably, our first result is a good indication towards how similar these restrictions are. We show that, when requiring both restrictions to hold globally, the learners are equally powerful. Recall that monotone learners exhibit a strongly monotone behaviour on texts belonging to languages they learn. If they are required to be monotone on \emph{any} possible text, that is, to be globally monotone, they must show strongly monotone behaviour on any text. Thus, they are equally powerful as globally strongly monotone learners. Note that this equality, in fact, holds on the level of the restrictions itself. We provide the following theorem.

\begin{theorem}
  For all restrictions $\delta$ and all interaction operators $\beta$ we have that \label{thm:MonSMon}
  \begin{align*}
    [\tau(\SMon)\Txt\beta\delta] = [\tau(\Mon)\Txt\beta\delta].
  \end{align*}
\end{theorem}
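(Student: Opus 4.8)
The plan is to prove something slightly stronger, namely that the two restrictions coincide \emph{at the level of individual learners}: a learner is globally monotone if and only if it is globally strongly monotone, for every interaction operator $\beta$. Granting this, both learning criteria are witnessed by exactly the same learners (the success condition $\delta$ on target texts being identical on both sides), so the collected classes are equal. One inclusion is immediate: strong monotonicity implies monotonicity pointwise, since $W_{p(n)} \subseteq W_{p(m)}$ gives $W_{p(n)} \cap \content(T) \subseteq W_{p(m)} \cap \content(T)$; hence a globally strongly monotone learner is globally monotone. The whole content of the theorem is the converse.

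So the main step is: given a globally monotone learner $h$, show it is globally strongly monotone. I would fix an arbitrary text $T$, write $p = \beta(h,T)$, fix indices $n \le m$, and take any $x \in W_{p(n)}$, aiming to show $x \in W_{p(m)}$. The idea is to exploit the fact that the \emph{global} restriction must hold even on texts that $h$ does not learn, and to choose such an auxiliary text whose content contains $x$; this way the intersection with the content in the definition of $\Mon$ no longer masks $x$ and the monotone condition upgrades to a genuine inclusion. Concretely, let $T'$ be a text that agrees with $T$ on a prefix long enough to determine the outputs $\beta(h,\cdot)(i)$ for all $i \le m$, and that afterwards presents $x$ (then pause symbols). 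For the operators in question this prefix is finite and explicit---$\G(h,T)(i) = h(T[i])$, $\Psd(h,T)(i) = h(\content(T[i]), i)$ and $\Sd(h,T)(i) = h(\content(T[i]))$ each depend only on $T[i]$---so $\beta(h,T')(i) = p(i)$ for all $i \le m$, while $\content(T') \supseteq \{x\}$.

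Now I would apply global monotonicity of $h$ to the text $T'$ at the indices $n \le m$, obtaining $W_{\beta(h,T')(n)} \cap \content(T') \subseteq W_{\beta(h,T')(m)} \cap \content(T')$, i.e.\ $W_{p(n)} \cap \content(T') \subseteq W_{p(m)} \cap \content(T')$. Since $x \in W_{p(n)}$ and $x \in \content(T')$, the left-hand side contains $x$, hence $x \in W_{p(m)}$. As $x$, and then $n \le m$, and then $T$, were arbitrary, $h$ satisfies $\SMon$ on every text, completing the converse and thus the equality.

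The step I expect to be the real obstacle is conceptual rather than computational: recognising that one should perturb the text to inject $x$ into the content, and that \emph{globality} is precisely what licenses using a non-learnable auxiliary text $T'$. Two technical points deserve a line each in the full write-up: first, that the argument only uses the causality of $\beta$ (its $i$-th output is fixed by a finite prefix of the text), so it applies to an arbitrary interaction operator, not just $\G$, $\Psd$, $\Sd$; and second, that the global restriction is a predicate on \emph{total} learning sequences, so we may assume $\beta(h,\cdot)$ is total on all texts---otherwise $h$ is not an admissible globally (strongly) monotone learner in the first place---which is exactly what makes the manipulations above well-defined.
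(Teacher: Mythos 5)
Your proposal is correct and follows essentially the same route as the paper's proof: the paper also reduces to the learner level (via the starred learner $h^*$), and its key step is the same text perturbation, taking $T' = T[j] \concat x \concat T(j) \concat T(j+1) \concat \cdots$ so that $x$ enters the content and the $\Mon$ condition on $T'$ upgrades to the missing $\SMon$ inclusion. The only differences are cosmetic: the paper argues by contradiction rather than directly, and pads $T'$ with the tail of $T$ instead of pause symbols.
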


\begin{proof}
  The inclusion $[\tau(\SMon)\Txt\beta\delta] \subseteq [\tau(\Mon)\Txt\beta\delta]$ follows immediately. For the other inclusion, let $h^*$ be a $\tau(\Mon)\Txt\beta\delta$-learner in its starred form. Assume that $h^*$ is not $\tau(\SMon)$. Then, there exists some text $T$, $i<j$ and $x$ such that $x \in W_{h^*(T[i])} \setminus W_{h^*(T[j])}$. Considering the text $T' \coloneqq T[j] ^\frown x ^\frown T(j) ^\frown T(j+1) ^\frown \cdots$, we have
  \begin{align*}
    x \in W_{h^*(T[i])} \cap \content(T') \setminus W_{h^*(T[j])} \cap \content(T').
  \end{align*}
  Thus, $h^*$ is not $\tau(\Mon)$ on text $T'$, a contradiction.
\end{proof}

In particular, this result also implies all separations and equalities known for globally strongly monotone learners to hold for globally monotone learners as well. Most notably, Gold-style globally monotone learners are strictly less powerful than their total counterpart. 

\citet{KP16} show that any Gold-style learner following a delayable restriction can be assumed \emph{total} without loss of learning power. Next, we show that these Gold-style learners are more powerful than their partially set-driven counterpart. In particular, we show that even strongly monotone Gold-style learners are more powerful than any partially set-driven monotone learner. We do so by learning a class of languages on which the learner, in order to discard certain elements, needs to know the order the information appeared in. This, no partially set-driven monotone learner can do. The following result holds.

\begin{theorem}
  We have that $[\Txt\G\SMon\Ex] \setminus [\Txt\Psd\Mon\Ex] \neq \emptyset$.\label{thm:GSMon-PsdMon}
\end{theorem}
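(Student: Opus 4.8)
The plan is to exhibit a single class $\La$ lying in the difference by pairing an explicit $\Txt\G\SMon\Ex$-learner with an impossibility argument against every $\Txt\Psd\Mon\Ex$-learner. The guiding idea, as announced above, is that a full-information learner can read off the \emph{order} in which elements arrive and use it as a certificate telling it exactly which speculative elements are safe to commit to, whereas a partially set-driven learner, whose conjecture at step $i$ depends only on the pair $(\content(T[i]),i)$, is blind to this order and can be driven into a configuration where monotonicity and convergence become incompatible. Since $\Mon$ is a strictly \emph{weaker} restriction than $\SMon$, so that $[\Txt\Psd\SMon\Ex]\subseteq[\Txt\Psd\Mon\Ex]$, the impossibility must be argued directly for monotone learners; I would reuse the skeleton of the strongly monotone separation of \citet{KS16} and strengthen the diagonalisation so that it also rules out the extra freedom a monotone learner enjoys, namely that of discarding elements lying outside the target.

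For the positive direction I would build $\La$ around ``order signatures'' and give a Gold-style learner $h$ that keeps a single, nowhere-shrinking hypothesis. Because $\SMon$ forbids ever removing an element, the entire task of $h$ is to \emph{never add} an element that will not belong to the target; the arrival order is precisely the information that lets $h$ postpone each enlargement until the prefix seen so far has ruled out every competing language, so that each element $h$ commits to is guaranteed to lie in the limiting language. Verifying that these enlargements are nested and that $h$ stabilises on an exact index then yields $\SMon$ and $\Ex$; by Theorem~\ref{Thm:DelTotal} the resulting learner may moreover be taken total.

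For the negative direction I would assume a $\Txt\Psd\Mon\Ex$-learner $M$ for $\La$ and force a monotonicity violation by a \emph{double collision}, built self-referentially with \ORT so that the languages involved react to $M$'s own conjectures. First, colliding the working text $T$ of a target language $L$ with an auxiliary language on which $M$ has already converged, I would pin $M$ into conjecturing, at some stage $j$, a hypothesis that contains a target element $e\in L$ not yet presented on $T$. Then at a later stage $i>j$, at which $T$ collides (same content, same time, different order) with a text of a second language $L'$ with $e\notin L'$ on which $M$ has likewise converged, the conjecture $M(\content(T[i]),i)$ is pinned to the index of $L'$ and hence omits $e$; since $e$ is an element of the current target $L$, this discards a target element and contradicts $\Mon$. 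A Gold learner escapes the trap because the differing arrival orders of $T$ and of the two branches are visible to it, so it never commits to $e$ prematurely.

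The main obstacle is twofold and lies entirely in the negative direction. I must force $M$ to \emph{include} the target element $e$ before the fatal stage, even though monotone and explanatory learning never compel a learner to conjecture an as-yet-unseen element of their own accord; this is why the inclusion has to be extracted indirectly, through convergence on an auxiliary language in which $e$ genuinely occurs, and why the whole construction must be adaptive, hence reliant on \ORT. At the same time, every auxiliary language used to pin $M$, together with the finite convergence witnesses, must belong to the \emph{same} fixed class $\La$ and must leave $\La$ inside $[\Txt\G\SMon\Ex]$; in particular $\La$ must avoid containing any infinite language together with a chain of finite subsets converging to it, since such a configuration is already not strongly monotonically Gold-learnable and would therefore destroy the positive direction. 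Reconciling ``rich enough to trap every $M$'' with ``still strongly monotone Gold-learnable'' is the delicate heart of the argument and is precisely the non-trivial step beyond the strongly monotone case of \citet{KS16}.
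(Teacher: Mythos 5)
Your high-level architecture coincides with the paper's (a self-referential class built via \ORT, exploited through the fact that a partially set-driven learner only sees pairs $(\content(T[i]),i)$), and you correctly isolate the two delicate points: forcing the learner to put an \emph{unseen} element into a conjecture, and keeping the class $\Txt\G\SMon\Ex$-learnable. However, the mechanism you propose for the first point has a genuine gap, and it is the load-bearing step. You want to pin $M$'s stage-$j$ conjecture to an index of $L_{\mathrm{aux}}\ni e$ by colliding $(\content(T[j]),j)$, with $e\notin\content(T[j])$, against a text of $L_{\mathrm{aux}}$ ``on which $M$ has already converged''. This cannot be forced: $\Ex$-convergence is only required in the limit, so $M$ may follow the conservative policy of never outputting a hypothesis containing $e$ on any pair $(D,t)$ with $e\notin D$, converging on each text of $L_{\mathrm{aux}}$ only \emph{after} $e$ has appeared in it. Such an $M$ still learns $L_{\mathrm{aux}}$ and every other language in play, yet then every pair on which its conjecture contains $e$ also has $e$ in its content, so that pair can never later be collided with a text of $L'$ (whose prefixes never contain $e$): your double collision never gets off the ground. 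Note the asymmetry: your second collision (forcing the \emph{omission} of $e$ at a large stage via convergence on $L'$) is unproblematic, because there existence of a late enough stage suffices; it is the inclusion step that convergence alone cannot deliver.

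The paper closes exactly this hole with a different engine, which your sketch gestures at (``adaptive, hence reliant on \ORT'') but does not supply. Via \ORT it builds $\sigma_0\subseteq\sigma_1\subseteq\cdots$ by \emph{pumping} designated elements $\langle e,2i+1\rangle,\langle e,2i+2\rangle$ until a mind change of the assumed learner is observed. Either this succeeds forever, and then the learner changes its mind infinitely often on a text of the resulting infinite language $W_e$, which lies in $\La$ by the design of the positive learner $h$ (Case 1); or there is a last defined $\sigma_k$, on which pumping either element never changes the output. In that case singleton learning places $\langle e,2k+1\rangle$ into some conjecture, monotonicity with respect to the auxiliary language $L_1\in\La$ propagates it along $\sigma_k$, and then the observed \emph{no-mind-change} property combined with partial set-drivenness transfers the inclusion to the pair $(\content(\sigma_k),|\sigma_k|)$, whose content does \emph{not} contain the element. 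It is this combination---monotone propagation plus non-reaction to pumping---and not convergence that defeats the conservative learner (whose conservatism is punished in Case 1 by infinitely many mind changes). Doing this simultaneously with two elements, $\langle e,2k+1\rangle$ and $\langle e,2k+2\rangle$, makes every later conjecture on texts of $L$ contain two non-members of $L$, so the paper's contradiction is with correctness of $\Ex$-learning of $L$ rather than with $\Mon$ itself, thereby also sidestepping the need to force an omission at a construction-known stage. Finally, your positive direction remains entirely abstract; in the paper the separating class is $\La=\Txt\G\SMon\Ex(h)$ for a concretely defined $h$ whose padding and marker conventions (the second coordinate $0$ acting as an override) are precisely what guarantee that all adversarial languages---$W_e$, the finite variants $L_1,L_2$, and all singletons---stay strongly monotonically Gold-learnable. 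Without that, the ``delicate heart'' you correctly identify is not resolved but only named.
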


\begin{proof}
  We modify the proof of $[\Txt\It\SMon\Ex] \setminus [\Txt\Psd\SMon\Ex] \neq \emptyset$, see \citet[Thm.~4.5]{KS16} to separate $[\Txt\G\SMon\Ex]$ from $[\Txt\Psd\Mon\Ex]$.

  Recall the padding function $\pad \in \totalCp$ and the function $\ind \in \totalCp$ returning the canonical indices of finite sets. Also, fix a pairing function $\langle.,.\rangle$ and, for $i \in \{1,2\}$, let $\pi_i \in \totalCp$ be the function returning the $i$-th component of that pairing function. For finite sequences $\sigma$, we define the auxiliary computable functions 
  \begin{align*}
    w_\sigma &= \begin{cases} 0, &\falls \content(\sigma) \subseteq \{ 0 \}, \\ 1, &\sonst \end{cases} \\
    x_\sigma &= \begin{cases} 0, &\falls |\content(\sigma)| \leq 1, \\ 1, &\sonst \end{cases} \\
    y_\sigma &= \begin{cases} 0, &\falls \forall x \in \content(\sigma) \colon \pi_2(x) = 0, \\ \sigma(i'), &\text{else, with } i' \text{ minimal such that } \pi_2(\sigma(i')) \neq 0. \end{cases} \\
    z_\sigma &= \begin{cases} 0, &\falls \forall x \in \content(\sigma) \colon \pi_2(x) \neq 0, \\ \sigma(i'), &\text{else, with } i' \text{ minimal such that } \pi_2(\sigma(i')) = 0. \end{cases}
  \end{align*}
  Intuitively, $w_\sigma$ changes to one once the first non-zero element appears in $\sigma$, $x_\sigma$ tests whether the sequence's content contains at least two elements, while $y_\sigma$ and $z_\sigma$ search for the first input without and with a zero as second (coding) component, respectively. All these functions are total and change their value at most once. Define the $\G$-learner~$h$ as 
  \begin{align*}
    h(\sigma) = \begin{cases}
      \pad(\ind(\emptyset), 0, 0, 0, 0), &\falls w_\sigma = 0, \\
      \pad(\ind(\{ y_\sigma \}), w_\sigma,  0, y_\sigma, 0), &\sonstfalls w_\sigma \neq 0 \wedge x_\sigma = 0 \wedge y_\sigma \neq 0 \wedge z_\sigma = 0, \\
      \pad(\ind(\{ z_\sigma \}), w_\sigma,  0, 0, z_\sigma), &\sonstfalls w_\sigma \neq 0 \wedge x_\sigma = 0 \wedge y_\sigma = 0 \wedge z_\sigma \neq 0, \\
      \pad(\pi_1(y_\sigma), w_\sigma, x_\sigma, y_\sigma, 0), &\sonstfalls w_\sigma \neq 0 \wedge x_\sigma \neq 0 \wedge y_\sigma \neq 0 \wedge z_\sigma = 0, \\
      \pad(\pi_1( z_\sigma), w_\sigma, x_\sigma, y_\sigma, z_\sigma), &\sonstfalls w_\sigma \neq 0 \wedge x_\sigma \neq 0 \wedge y_\sigma \neq 0 \wedge z_\sigma \neq 0, \\
      \pad(\ind(\emptyset), 0, 0, 0, 0), & \sonst
    \end{cases}
  \end{align*}
  The intuition is the following. Once the learner sees the first non-zero element, it suggests a code for this singleton. This ensures that $h$ learns all singletons except for $\{ 0 \}$. When seeing other elements, as long as no second coding component is zero, $h$ outputs a padded version of the first component of the firstly seen element. This can only be overruled if it sees an element with second coding component zero. Now let $\La = \Txt\G\SMon\Ex(h)$.

  Now assume there exists some (partial) learner $h'$ learning $\La$ in a partially set-driven, monotone way, that is, $\La \subseteq \Txt\Psd\Mon\Ex(h')$. Let $(h')^* = h'(\content(\sigma), |\sigma|)$ denote its starred learner. We will use \textbf{ORT} twice. First, it yields a total computable, strictly monotone increasing function $a \in \totalCp$ such that for all finite sets $D$ we have
  \begin{align*}
    W_{a(D)} = D \cup \{ \langle a(D), 0 \rangle \}.
  \end{align*} 
  As $a$ is strictly monotone it is computably invertible, that is, we can regain the set $D$ from the value $a(D)$. Without loss of generality, $0 \notin \range(a)$. Secondly, there is an index $e$ as well as a computable sequence $(\sigma_i)_{i \in \N}$ of sequences such that the following construction holds. For all $i$, the $\sigma_i$ are recursively defined as
  \begin{align*}
    \sigma_0 &= \varepsilon, \\
    \sigma_{i+1} &= {\sigma_i} ^\frown \begin{cases} \langle e, 2i +1 \rangle^t, &\falls \exists t\colon (h')^*(\sigma_i)\convs \neq (h')^*({\sigma_i} ^\frown \langle e, 2i +1 \rangle^t)\convs, \\
    \langle e, 2i +2 \rangle^t, &\sonstfalls \exists t\colon (h')^*(\sigma_i)\convs \neq (h')^*({\sigma_i} ^\frown \langle e, 2i +2 \rangle^t)\convs. \end{cases} \\
    W_e &= \bigcup_{i \in \N, \sigma_i \convs} \content(\sigma_i).
  \end{align*}
  Intuitively, given $\sigma_i$, we search for an extension on which $h'$ makes a mind change. If none such is found, $\sigma_{j}$, with $i<j$, remain undefined. To enumerate $W_e$, at stage $i$, compute the sequence $\sigma_i$ and, if this computation halts, then enumerate $\content(\sigma_i)$. Note that no element of $W_e$ has $0$ as second component. We now distinguish the following two cases.
  \begin{enumerate}
    \itemin{1. Case:} The language $L \coloneqq W_e$ is infinite. In this case, the sequence $\sigma_i$ is defined for every $i$. We show that $L \in \La$. Let $T \in \Txt(L)$ and let $i'$ such that $x = \langle e, i' \rangle$ is the first element of $L$ to appear in $T$. As we have seen above, when presented the text $T$ the learner $h$ will conjecture $\pad(\ind(\emptyset),0,0,0)$, that is, a code for the empty set, until it sees $x$. Then, it changes its mind to $\pad(\ind(\{ \langle e, i' \rangle \}), \langle e, i' \rangle, 0, 0)$, that is, a code for the singleton $\{ \langle e,i' \rangle \}$. Once the learner is presented another element of the (infinite) language $L$, it returns $\pad(\pi_1(x),\langle e, i' \rangle,1,0)$ and never changes its mind again. Note that $\pi_1(x) = e$. Thus, $h$ converges to a hypothesis of $L$, maintaining strong monotonicity along the way. 

    The learner $h'$ on the other hand cannot identify $L$ from the text $\bigcup_{i \in \N} \sigma_i$ as it makes infinitely many mind changes by definition, a contradiction.

    \itemin{2. Case:} The language $L \coloneqq W_e$ is finite. In this case, from some point onwards, the sequences $\sigma_i$ are not defined any more. Let $\sigma_k$ be the last defined such sequence. In particular, for all $t \in \N$ we have 
    \begin{align}
      (h')^*({\sigma_k} ^\frown {\langle e,2k+1 \rangle^t}) = (h')^*(\sigma_k) = (h')^*({\sigma_k} ^\frown {\langle e,2k+2 \rangle^t}). \label{Prop:sigma_k}
    \end{align}
    We show that $h$ learns $L$ and
    \begin{align*}
      L_1 &\coloneqq L \cup \{ \langle e, 2k+1 \rangle, \langle a(L \cup \{ \langle e, 2k+1 \rangle \} ), 0 \rangle \} \text{ and} \\
      L_2 &\coloneqq L \cup \{ \langle e, 2k+2 \rangle, \langle a(L \cup \{ \langle e, 2k+2 \rangle \} ), 0 \rangle \}.
    \end{align*}
    Note that $\langle e, 2k+1 \rangle \notin L_2$ and $\langle e, 2k+2 \rangle \notin L_1$ and both languages have at least two elements. First, we show that $h$ learns $L$. Let $T \in \Txt(L)$. As long as no element is presented, $h$ outputs a hypothesis for the empty set. Once the first element is presented, it changes its mind to a code of that singleton. As soon as a second element is presented (if ever), it changes its mind to $e$, which is correct. We proceed by showing that $h$ also learns $L_1$ and $L_2$. Let $T \in \Txt(L_1)$. To ease notation, let $D' \coloneqq L \cup \{ \langle e, 2k+1 \rangle \}$. Let $x$ again be the first element to appear in $T$. 
    \begin{enumerate}
      \itemin{2.1. Case:} $x = \langle e, i' \rangle$. Again, $h$ proposes $\pad(\ind(\emptyset),0,0,0)$, that is, (a code of) the empty set, until it sees $x$. Then, it changes its mind to $\pad(\ind(\{ \langle e, i' \rangle \}), \langle e, i' \rangle, 0, 0)$, that is, (a code of) the singleton $\{ \langle e,i' \rangle \}$. It may temporarily propose $\pad(\pi_1(x),\langle e, i' \rangle,1,0)$, where $W_{\pad(\pi_1(x),\langle e, i' \rangle,1,0)} = W_e = L$, and then finally switch to the correct hypothesis $\pad(\pi_1(\langle a(D'), 0 \rangle), \langle e, i' \rangle, 1, \langle a(D'), 0 \rangle)$, that is, the superset 
      \begin{align*}
        W_{\pad(\pi_1(\langle a(D'), 0 \rangle), \langle e, i' \rangle, 1, \langle a(D'), 0 \rangle)} = W_{a(D')} = D' \cup \{ \langle a(D'), 0 \rangle \} = L_1,
      \end{align*}
      once it sees $\langle a(D'), 0 \rangle$ for the first time. 
      \itemin{2.2. Case:} $x = \langle a(D'), 0 \rangle$. In this case, $h$ proposes $\pad(\ind(\emptyset),0,0,0)$, that is, (a code of) the empty set, until it sees $x$. Upon seeing this new element, it changes its mind to $\pad(\ind(\{ \langle a(D'), 0 \rangle \}), \langle a(D'), 0 \rangle, 0, 0)$, that is, (a code of) the singleton $\{ \langle a(D'), 0 \rangle \}$. Once it sees another element, it skips to $\pad(\pi_1(\langle a(D'), 0 \rangle), \langle a(D'), 0 \rangle, 1, \langle a(D'), 0 \rangle)$, that is, a hypothesis for the superset 
      \begin{align*}
        W_{\pad(\pi_1(\langle a(D'), 0 \rangle), \langle a(D'), 0 \rangle, 1, \langle a(D'), 0 \rangle)} = W_{a(D')} = D' \cup \{ \langle a(D'), 0 \rangle \} = L_1.
      \end{align*}
    \end{enumerate}
    In both cases the learner $h$ converges to the correct hypothesis, while maintaining strong monotonicity. The reasoning for $L_2$ is analogous.

    We proceed by showing that $h'$ cannot accomplish the same learning tasks. First, note that, for every index $i > 0$, the set $\{ \langle e,i \rangle \}$ is in $\La$ as $h$ learns all singletons except $\{ 0 \}$. Thus, by assumption we get $\{ \langle e,i \rangle \} \in \Txt\Psd\Mon\Ex(h')$. Therefore, there exists a (computable) function $t$ such that for every $i>0$
    \begin{align*}
      \langle e,i \rangle \in W_{(h')^*(\langle e,i \rangle^{t(i)})}.
    \end{align*}
    The learner $h'$ needs to be both defined and monotone on sequences ${\langle e,2k+1 \rangle^{t(2k+1)}}^\frown \sigma_k$ and ${\langle e,2k+2 \rangle^{t(2k+2)}}^\frown \sigma_k$ as they are initial sequences of texts for $L_1$ and $L_2$. Thus, we get 
    \begin{align*}
      \langle e,2k+1 \rangle &\in W_{(h')^*({\langle e,2k+1 \rangle^{t(2k+1)}}^\frown \sigma_k)} \text{ and} \\
      \langle e,2k+2 \rangle &\in W_{(h')^*({\langle e,2k+2 \rangle^{t(2k+2)}}^\frown \sigma_k)}.
    \end{align*}
    By Property~\eqref{Prop:sigma_k} of the sequence $\sigma_k$ and the partially set-drivenness of $h'$ we have, for all $j \geq \max(t(2k+1), t(2k+2))$,
    \begin{align*}
      (h')^*({\langle e,2k+1 \rangle^j} ^\frown \sigma_k) = (h')^*(\sigma_k) = (h')^*({\langle e,2k+2 \rangle^j}^\frown \sigma_k).
    \end{align*}
    Now $\sigma_k$ is an initial sequence of elements in $L = W_e$, but also 
    \begin{align*}
      \left\{ \langle e, 2k+1 \rangle, \langle e, 2k+2 \rangle \right\} \subseteq W_{(h')^*(\sigma_k)}.
    \end{align*}
    As $h'$ is monotone, and $\sigma_k$ is also an initial sequence of a text of $L_1$ and $L_2$, we get for every $\tau \in L^*$ that
    \begin{align*}
      \left\{ \langle e, 2k+1 \rangle, \langle e, 2k+2 \rangle \right\} \subseteq W_{(h')^*(\sigma_k \tau)}.
    \end{align*}
    Thus, $(h')^*(\sigma_k \tau)$ is not a correct hypothesis for $L$. Consequently, $h'$ cannot learn $L$, a contradiction. \qedhere
  \end{enumerate} \noqed 
\end{proof}

Next, we show that a partial learner, even sustaining a severe memory restriction and expected to be strongly monotone, is still more powerful than any total monotone, partially set-driven learner. In order to construct a separating class of languages, the trick is that the total learner \emph{must} make a guess, a decision which a partial learner can await and, thus, learn more languages. The following result holds.

\begin{theorem}
  We have that $[\Txt\Sd\SMon\Ex] \setminus [\totalCp\Txt\Psd\Mon\Ex] \neq \emptyset$.
\end{theorem}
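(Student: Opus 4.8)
The plan is to exhibit the separating class as $\La = \Txt\Sd\SMon\Ex(h)$ for a carefully constructed partial set-driven learner $h$, so that the membership $\La \in [\Txt\Sd\SMon\Ex]$ is immediate, and then to diagonalise against an arbitrary \emph{total} partially set-driven monotone learner using \ORT. The guiding principle, following the hint, is to build a class on which the correct hypothesis cannot be decided from a piece of ambiguous content alone but only after a distinguishing datum appears. A partial set-driven learner may simply remain undefined on such ambiguous content and commit --- monotonically --- once the distinguishing element is present; a total learner, by contrast, is forced to output a hypothesis on the ambiguous content already, and we turn this forced commitment against it.

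First I would design $h$ so that it waits. On content that is consistent with several target languages $h$ stays undefined (this is where partiality is essential), and as soon as a witnessing element enters the content $h$ outputs an index for the now-determined language, enlarging its previous conjecture only by adding this element and its consequences. Since the decision depends on \emph{which} elements are present and not on their order, set-drivenness is no obstacle, and since $h$ only ever grows its guess, strong monotonicity is maintained. Verifying $\La \in [\Txt\Sd\SMon\Ex]$ then amounts to checking that on every $L \in \La$ the learner eventually sees a witness, converges to a correct index, and never retracts an element.

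For the negative direction I would assume $\La \subseteq \totalCp\Txt\Psd\Mon\Ex(h')$ for a total $h'$ and apply \ORT to obtain a self-referential index $e$ together with a probe construction that queries $h'$. Totality is used twice here: it makes the search effective (every query $h'(D,t)$ returns a value, so the construction never hangs --- precisely what a partial learner could exploit to escape), and it forces $h'$ to produce a hypothesis on the ambiguous content. As in the proof of Theorem~\ref{thm:GSMon-PsdMon}, I expect a case split according to whether the self-referential language $L \coloneqq W_e$ comes out finite or infinite. In the infinite case the probe forces $h'$ into infinitely many forced commitments along some text and it fails $\Ex$; in the finite case the construction stops at a last witnessed stage, reads off the element $d$ that $h'$'s forced guess commits to, and places $d$ into a sibling language of $\La$ sharing the same ambiguous content. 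Monotonicity then prevents $h'$ from ever retracting $d$, so it cannot be correct on the target that omits $d$, while $h$ learns all of $L$ and its siblings strongly monotonically by waiting for the respective witnesses.

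The main obstacle I anticipate is reconciling the strong monotonicity of $h$ with the diagonalisation: strong monotonicity is unforgiving, in that once $h$ places an element into \emph{any} hypothesis, on \emph{any} text, it can never remove it. Hence the class must be engineered so that whichever way the forced commitment of $h'$ falls, the trap element still lies in some language that $h$ can learn while only ever growing its conjectures. Threading the \ORT self-reference so that the total learner's unavoidable guess lands inside a language that remains $\Sd\SMon\Ex$-learnable --- and simultaneously ensuring that the set-driven learner has a genuine content-based witness to wait for in each target --- is the delicate point of the argument.
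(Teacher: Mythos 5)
Your overall architecture is the same as the paper's (which adapts \citet[Thm.~4.1]{KS16}): take $\La = \Txt\Sd\SMon\Ex(h)$ for a partial set-driven learner $h$ that stays undefined on ambiguous content and commits, strongly monotonically, only once a witness appears; then assume a total learner $h'$ with $\La \subseteq \totalCp\Txt\Psd\Mon\Ex(h')$, use \ORT to build a probe construction querying $h'$, split on whether the self-referential language $W_e$ is finite or infinite, kill the infinite case by infinitely many mind changes, and in the finite case trap $h'$ with a sibling language $W_{e'} = W_e \cup \{d\}$ via monotonicity. The roles you assign to partiality (waiting) and to totality (effectiveness of the probes) are exactly the ones exploited in the paper.

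However, the finite case as you state it has a genuine hole: you ``read off the element $d$ that $h'$'s forced guess commits to.'' Totality forces $h'$ to output \emph{some} hypothesis on the ambiguous content, but that hypothesis may well be exactly (an index for) $W_e$, i.e.\ correct, in which case there is nothing to read off; and even if the guess did contain an extra element, hypotheses are only c.e.\ indices, so waiting for such an element to be enumerated could stall the construction forever. In the paper the commitment is \emph{manufactured}, not observed, and this requires an ingredient absent from your proposal: since $h$ learns all singletons, so must $h'$, which yields a computable function $t$ with $x \in W_{h'(\{x\},t(x))}$ for every $x$. The trap element is the next probe element $a(k)$ (placed into the sibling $W_{e'}$ by construction, not read off); monotonicity of $h'$ along a text of $W_{e'}$ beginning with $a(k)^{t(a(k))}$ pushes the singleton commitment to the pair $(\content(a[k+1]), t(a(k))+k+1)$, and the no-mind-change outcome of the probe at stage $k$ (which is precisely what makes $k$ the last witnessed stage) transfers it \emph{syntactically} to $(\content(a[k]), t(a(k))+k)$ --- a pair whose content omits $a(k)$. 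Only from there does your retraction argument take over: monotonicity along texts of $W_{e'}$ keeps $a(k)$ in every hypothesis $h'(\content(a[k]),t)$ for $t \geq t(a(k))+k$, so $h'$ fails on $W_e$. So to complete your plan you must (i) ensure all singletons lie in $\La$ and extract the computable commitment times $t(x)$, and (ii) define the probes as mind-change tests at time-stamps shifted by exactly these $t$-values, so that the singleton commitment, monotonicity, and the probe outcome can be chained together; totality alone does not produce the committed element your argument pivots on.
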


\begin{proof}
  We adapt the proof of \citet[Thm.~4.1]{KS16} as follows. Let $h \in \partialCp$ be the following learner. With $p_0$ being such that $W_{p_0} = \emptyset$, let for each finite set $D \subseteq \N$
  \begin{align*}
    h(D) = \begin{cases}
      \pad(p_0,0), &\falls D = \emptyset, \\
      \pad(\ind(D),0), & \sonstfalls |D| = 1, \\
      \divs, &\sonstfalls \exists x \in D\colon \varphi_x(0)\divs \vee \unpad_2(\varphi_x(0)) \notin \{1,2\}, \\
      e, &\sonstfalls \forall x \in D\colon \unpad_1(\varphi_x(0)) = e, \\
      e', &\sonstfalls \big( \exists y \forall x \in D \colon \unpad_2(\varphi_x(0)) = 1 \Rightarrow \varphi_x(0) = y \big) \wedge \\
      &\hspace*{-0.175em}\phantom{else, if} \big( \forall x \in D\colon \unpad_2(\varphi_x(0)) = 2 \Rightarrow \varphi_x(0) = e'\big) , \\
      \divs, &\sonst
    \end{cases}
  \end{align*}
  The intuition is the following. While no elements are presented, $h$ conjectures the empty set. Once, a single element is presented, $h$ suggests that singleton. Thus, $h$ learns all singletons. Given more elements, $h$ either outputs the first coordinate of the elements (if they all coincide), or another code if there are different second coordinates.

  Let $\La = \Txt\Sd\SMon\Ex(h)$. Assume there exists a $\totalCp\Txt\Psd\Mon\Ex$-learner $h'$ which learns $\La$, that is, $\La \subseteq \totalCp\Txt\Psd\Mon\Ex(h')$. Since $h$ learns all singletons, so does $h'$. Thus, there is a total, strictly monotone function $t \in \totalCp$ such that $t(0) > 0$ and for each $x$
  \begin{align}
    x \in W_{h'(\{x\},t(x))}. \label{MonSingleton}
  \end{align}
  With \ORT, we get a total recursive predicate $P \in \totalCp$, a strictly monotone $a \in \totalCp$ and indices $e, e' \in \N$ such that for all $i \in \N$
  \begin{align*}
    P(i) &\Leftrightarrow h'(\content(a[i]), t(a(i))+i) \neq h'(\content(a[i+1]), t(a(i))+i+1), \\
    W_e &= \{ a(i) \mid \forall j \leq i \colon P(j) \}, \\
    W_{e'} &= \{ a(i) \mid \forall j < i \colon P(j) \}, \\
    \varphi_{a(i)}(0) &= \begin{cases} \pad(e,1), &\falls P(i), \\ \pad(e',2), &\sonst \end{cases}
  \end{align*}

  We show that $W_e$ and $W_{e'}$ are in $\La$. 
  \begin{enumerate}
    \itemin{1. Case:} $W_e$ is infinite. This means for all $i$ we have $P(i)$. Thus, $W_e = W_{e'}$. Thus, it suffices to show $W_e \in \La$. Let $T \in \Txt(W_e)$. For $n>0$, let $D_n \coloneqq \content(T[n])$. As long as $D_n = \emptyset$, we have $h(D_n) = \pad(p_0, 0)$, i.e.\ a code for the empty set. When $|D_n| = 1$, we have $h(D_n) = \pad(\ind(D_n),0)$, a code for the singleton $D_n$. Once $D_n$ contains more than one element, $h(D_n)$ starts unpadding. As, for all $i$, $\varphi_{a(i)}(0) = \pad(e,1)$, we have $\unpad_1(\{\varphi_x(0) \mid x \in D_n \}) = \{ e \}$. Thus, $h$ is strongly monotone and will output $e$ correctly.
    \itemin{2. Case:} $W_e$ is finite. Let $k$ be such that $W_e = \{ a(j) \mid j < k \}$ and $W_{e'} = \{ a(j) \mid j < k+1 \}$. Again, as long as no elements or only one element is shown, $h$ will output a code for the empty, respectively singleton set. As $W_e \subseteq W_{e'}$ and $\unpad_1(\{ \varphi_x(0) \mid x \in W_e \}) = \{ e \}$, $h$ will output $e$ as long as it sees only elements from $W_e$. Once it sees $a(k) \in W_{e'}$, it correctly changes its mind to $e'$. This maintains strong monotonicity, and is the correct behaviour. 
  \end{enumerate}
  Thus, $W_e, W_{e'} \in \La$. We show that $h'$ cannot learn both simultaneously.
  \begin{enumerate}
    \itemin{1. Case:} $W_e$ is infinite. On the following text of $W_e$
    \begin{align*}
      a(0)^{t(0)} a(1)^{t(1)+1} a(2)^{t(2)+2} \dots,
    \end{align*}
    learner $h'$ makes infinitely many mind changes. Thus, it cannot learn $W_e$, a contradiction.
    \itemin{2. Case:} $W_e$ is finite. Let $k$ be minimal such that $\neg P(k)$, and thus $W_e = \content(a[k])$ and $W_{e'} = \content(a[k+1])$. By Condition~\eqref{MonSingleton} and monotonicity of $h'$ on $W_{e'}$ we have 
    \begin{align*}
      a(k) \in W_{h'(\content(a[k+1]), t(a(k))+k+1)},
    \end{align*} 
    as ${a(k)^{t(a(k))}}^\frown a[k]$ is a sequence of elements in $W_{e'}$ and $a(k) \in W_{e'}$. Since $\neg P(k)$, we get that $h'(\content(a[k]), t(a(k))+k) = h'(\content(a[k+1]), t(a(k)) + k +1)$ and thus
    \begin{align*}
      a(k) \in W_{h'(\content(a[k]),t(a(k))+k)}.
    \end{align*}
    For each $t \geq t(a(k))+k$, we have that $(\content(a[k]),t)$ is an initial sequence for some text of $W_{e'}$, and thus, by monotonicity of $h'$ we get
    \begin{align*}
      a(k) \in W_{h'(\content(a[k]),t)}.
    \end{align*}
    As $a(k) \notin W_e = \content(a[k])$, $h'$ cannot identify $W_e$, a contradiction. \qedhere
  \end{enumerate}\noqed
\end{proof}

Lastly, it remains to be shown that globally strongly monotone, partially set-driven learners are more powerful than their monotone, set-driven counterpart. A separation from strongly monotone set-driven learners has already been shown by \citet{KS16}. We observe that, with a slight adaptation of their proof, one can obtain an even stronger result. We show that globally strongly monotone, partially set-driven learners outperform \emph{unrestricted} set-driven learners. This powerful result shows the immense weakness of set-driven learners which results from a lack of ``learning time'' and strengthens the finding of \citet{Fulk90}, who separated explanatory $\G$-learners from set-driven $\Bc$-learners, even more. He stated that ``\emph{[$\dots$] it is worthwhile to have some time to think over one’s experiences; merely to experience something is not always sufficient to understand it.}'' We provide the result.

\begin{theorem}
  We have that $[\tau(\SMon)\Txt\Psd\Ex] \setminus [\Txt\Sd\Ex] \neq \emptyset$.
\end{theorem}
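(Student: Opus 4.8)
The plan is to exhibit a class $\La$ together with a globally strongly monotone, partially set-driven, explanatory learner $h$ witnessing $\La \in [\tau(\SMon)\Txt\Psd\Ex]$, and then to show by an \ORT-diagonalization that no set-driven learner $\Ex$-learns $\La$. I would keep the positive learner essentially as in the $\tau(\SMon)\Txt\Psd\Ex$-construction of \citet{KS16}: the elements $x$ of the relevant languages carry a self-referential tag $\varphi_x(0) = \pad(\cdot,\cdot)$ from which $h$ reads off the index it should eventually output, and $h$ uses the step counter supplied by partial set-drivenness as a computation budget to decode these (possibly slowly converging) tags, always committing only to supersets so that its conjectures form an increasing chain on \emph{every} text. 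I would then set $\La \coloneqq \tau(\SMon)\Txt\Psd\Ex(h)$. The essence of the adaptation is that, whereas \citet{KS16} separate only from set-driven \emph{strongly monotone} learners and there exploit the negative learner's monotonicity, here the adversary $h'$ is an arbitrary set-driven learner, so the contradiction must be routed solely through set-drivenness together with syntactic convergence.

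For the negative direction I would assume $\La \subseteq \Txt\Sd\Ex(h')$ and, using \ORT, build a strictly increasing $a \in \totalCp$, a mind-change predicate $P(i) :\Leftrightarrow h'(\content(a[i])) \neq h'(\content(a[i+1]))$, and indices $e, e'$ with $W_e = \{ a(i) \mid \forall j \leq i\colon P(j) \}$ and $W_{e'} = \{ a(i) \mid \forall j < i\colon P(j) \}$, where the tags are chosen so that $\varphi_{a(i)}(0)$ points to $e$ as long as $P(i)$ holds and to $e'$ at the first failure. By construction $W_{e'}$ equals $W_e$ in the infinite case and equals $W_e \cup \{ a(k) \}$ for the least $k$ with $\neg P(k)$ in the finite case. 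The first verification step is that $h$ learns both $W_e$ and $W_{e'}$ in a globally strongly monotone, syntactically convergent fashion, so that indeed $W_e, W_{e'} \in \La$; this is where the tags and the counter do their work. (Since $h'$ may be partial, I would phrase $P$ via a dovetailed search for convergence, so that a divergence of $h'$ simply truncates the language — which still yields a failure of $\Ex$-convergence below.)

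The diagonalization then splits into two cases. If $W_e$ is infinite, then $P(i)$ holds for all $i$, so along the canonical text of $W_e$ the set-driven learner $h'$ changes its conjecture between the two successive content-prefixes $\content(a[i])$ and $\content(a[i+1])$ for every $i$; hence it makes infinitely many mind changes and cannot $\Ex$-learn $W_e$. If $W_e$ is finite, let $k$ be least with $\neg P(k)$; then $h'(\content(a[k])) = h'(\content(a[k+1]))$, that is, $h'$ returns one and the same index on the two distinct finite contents $W_e = \content(a[k])$ and $W_{e'} = \content(a[k+1])$. Since a set-driven explanatory learner on any text of a finite language converges exactly to the value of $h'$ on that language's content, this common index would have to be correct for both $W_e$ and $W_{e'}$, which is impossible as $W_e \subsetneq W_{e'}$. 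Either way $\La \subseteq \Txt\Sd\Ex(h')$ is contradicted.

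I expect the main obstacle to be the positive side rather than the diagonalization. One must define $h$ so that it is globally strongly monotone — monotone on \emph{every} text, not merely on texts of languages it learns — while still using the counter to decode the self-referential tags and converging syntactically on precisely the tagged languages. Enforcing global monotonicity forces $h$ to conjecture only supersets (for instance by always joining the current content with the decoded set), and verifying that this remains compatible with syntactic convergence on both the infinite instance and the two finite instances $W_e, W_{e'}$ is the delicate point. By contrast, the negative argument is clean once $P$ is in place, precisely because it uses nothing about $h'$ beyond set-drivenness and $\Ex$-convergence, which is exactly the strengthening over \citet{KS16} that drops the monotonicity assumption on the set-driven adversary.
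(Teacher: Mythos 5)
Your diagonalization itself (the two cases against the set-driven learner $h'$) is sound, and your route is genuinely different from the paper's: you model the construction on the $W_e$/$W_{e'}$/$P$ structure of the preceding theorem, whereas the paper simply re-uses the construction of \citet[Thm.~4.8]{KS16} (elements are triples $\langle e,p,i\rangle$) and observes that its negative argument never used monotonicity of the adversary. The problem is the positive side, which you yourself flag as the delicate point but do not resolve, and the proposed device --- ``always joining the current content with the decoded set'' --- actually fails. In the infinite case ($P(i)$ holds for all $i$) the content grows forever, so conjectures of the form $\join(e,D)$ change syntactically forever; then $h$ does not $\Ex$-learn $W_e$, so $W_e \notin \La$ and the ``infinitely many mind changes'' case yields no contradiction. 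Hence on label-$1$-consistent data $h$ must output the plain decoded index $e$. But then the later transition, when the label-$2$ element $a(k)$ arrives, must produce a conjecture containing $W_e$ \emph{on every text}, including arbitrary texts whose tags point to arbitrary indices $q_1, q_2$ with $W_{q_1} \not\subseteq W_{q_2}$; joining the \emph{content} does not achieve this, since $W_{q_1}$ is an r.e.\ set that need not be contained in the finitely many elements seen so far. Some additional mechanism is required: either conjecture an index for the union $W_{q_1} \cup W_{q_2}$ of the two pointed-to sets (via s-m-n), or use the paper's timer trick, in which the learner outputs the plain decoded $e$ as long as $\varphi_p(0)$ has not halted within the partially set-driven step budget $t$, and switches once and irreversibly to $\join(e,D)$ only after the halting is witnessed; the \ORT construction guarantees that $\varphi_p(0)$ halts exactly when $W_e$ is finite, so syntactic convergence survives, and data with inconsistent coordinates leads to a conjecture of $\N$.

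Second, your patch for a partial adversary does not go through. Suppose the dovetailed search for $P(k)$ diverges because $h'(\content(a[k]))$ converges but $h'(\content(a[k+1]))$ diverges. Then the tag of $a(k)$ never resolves, so your $h$ ignores $a(k)$ forever and converges to $e$, a code for the proper subset $W_e$, on texts of $W_{e'} = \content(a[k+1])$; hence $W_{e'} \notin \La$. Meanwhile the ``truncated'' language $W_e = \content(a[k])$ is a finite language on which $h'$ converges and which it may perfectly well learn correctly --- so this sub-case produces no failure of $\Ex$-convergence on any language of $\La$, and the contradiction evaporates. The paper avoids this structurally: its learner conjectures $\N$ on inconsistent data, so $\N \in \La$, and since every finite set is the content of an initial segment of some text of $\N$, any set-driven learner of $\La$ must be total; partiality of $h'$ then never arises. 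You need either this totality argument or some comparable way of forcing $h'$ to be defined on all the relevant finite sets.
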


\begin{proof}
  We adapt the proof of $[\tau(\SMon)\Txt\Psd\Ex] \setminus [\Txt\Sd\SMon\Ex] \neq \emptyset$, see \citet[Thm.~4.8]{KS16}. There, one can see that replacing $[\Txt\Sd\SMon\Ex]$ by $[\Txt\Sd\Ex]$ still works out. We include the proof for completeness reasons.

  Let $p_0$ be a code for the empty set and $p_2$ one for the set $\N$. Furthermore, let $\join \in \totalCp$ be a total computable function such that, for all $e \in \N$ and all finite sets $D \subseteq \N$, we have $W_{\join(e,D)} = W_e \cup D$. We consider the following learner $h \in \totalCp$. For any number $t \in \N$ and any finite $D\subseteq \N$, we let
  \begin{align*}
    h(D,t) = \begin{cases} 
      p_0, &\falls D = \emptyset, \\
      p_2, &\sonstfalls |\unpad_1(D)| > 1 \text{ or } |\unpad_2(D)| > 1, \\
      e, &\sonstfalls \exists p\colon \forall x \in D \exists i \colon x = \langle e,p, i \rangle \wedge \varphi_p(0) \text{ does not halt in } t \text{ steps}, \\
      \join(e,D), &\sonst
    \end{cases}
  \end{align*}
  First, we show that $h$ is strongly monotone on arbitrary texts. As long as no information is presented, that is, $D = \emptyset$, it outputs a code for the empty set. As long as all data is of the form, for some fixed $e,p \in \N$ and various $i \in \N$, $\langle e,p,i \rangle$ and the program $\varphi_p(0)$ does not halt in $t$ steps, the set $W_e$ is proposed. Once the halting is witnessed, if ever, $h$ changes its mind to some code of the superset $W_e \cup D$. If multiple first or second coordinates occur, $h$ conjectures $\N$ as its final guess. Now, let $\La = \tau(\SMon)\Txt\Psd\Ex(h')$.

  By way of contradiction, assume there exists some learner $h'$ such that $\La \subseteq \Txt\Sd\Bc(h')$. As $\N \in \La$, the learner $h'$ needs to be total. With \ORT, we get $e, p \in \N$ such that, using $\llangle e, p, j \rrangle \coloneqq \{ \langle e, p, i \rangle \mid i \leq j \}$ as an abbreviation,
  \begin{align*}
    W_e &= \{ \langle e,p,i \rangle \mid \forall j \leq i\colon  h'(\llangle e, p, j\rrangle) \neq h'(\llangle e, p, j+1 \rrangle) \}, \\
    \varphi_p(0) &= \begin{cases}
      1, &\falls \exists i \colon h'(\llangle e, p, i\rrangle) = h'(\llangle e, p, i+1 \rrangle) \}, \\
      \divs, & \sonst
    \end{cases}
  \end{align*}
  We show that there are languages $h$ learns, which $h'$ cannot learn. To that end, we make the following case distinction.
  \begin{enumerate}
    \itemin{1. Case:} The set $W_e$ is infinite. Then, $W_e = \{ \langle e,p,i \rangle \mid i \in \N \}$. In this case, $\varphi_p(0)$ never halts, so $h$ given any information about $W_e$ always outputs $e$, the correct code. On the other hand, $h'$ cannot learn $W_e$ from text $T \colon i \mapsto \langle e, p, i \rangle$ as it makes infinitely many mind changes, a contradiction.
    \itemin{2. Case:} The set $W_e$ is finite. Then, there exists $k$ such that $W_e = \llangle e, p, k \rrangle$. As $\langle e,p,k+1 \rangle$ is not in $W_e$, we have $h'(W_e) = h'(W_e \cup \{ \langle e,p, k+1 \rangle \})$. \newline
    In particular, there is a point $t$ where $\varphi_p(0)$ converges after $t$ steps. This implies that both finite languages $L = W_e$ and $L' = W_e \cup \{ \langle e,p, k+1 \rangle \}$ are in $\La$, as learner $h$ converges to the correct hypotheses, that is, $\join(e,L)$ and $\join(e,L')$ respectively. On the other hand, the learner $h'$ does not distinguish between $L$ and $L'$ as $h'(L) = h'(L')$, a contradiction. \qedhere
  \end{enumerate}\noqed
\end{proof}

Altogether, we gathered the necessary results to expand the explanatory strongly monotone map presented by \citet{KS16}, see Figure~\ref{FigSMon}, to also contain monotone learners. Our results are depicted in Figure~\ref{FigMonEx}.

\subsection{Behaviourally Correct Monotone Learning}\label{Sec:MonBc}

In this section we consider an analogous question: how do monotone and strongly monotone learners interact when requiring semantic convergence? By Theorem~\ref{thm:MonSMon} and the findings of \citet{KS16}, we already have that globally monotone set-driven (and even Gold-style) learners are as powerful as strongly monotone Gold-style learners. This is already a significant difference to the results obtained in the previous section. Most notably, this implies that an analogous result to Theorem~\ref{thm:GSMon-PsdMon}, where Gold-style $\SMon$-learners are shown to be more powerful than partially set-driven $\Mon$-learners, cannot hold true in the case of semantic convergence. The question arises, whether Gold-style $\Mon$-learners even can be separated from partially set-driven $\Mon$-learners in this case? Studies of various other restrictions, conducted by \citet{KS16} and \citet{DoskocK20}, show that behaviourally correct partially set-driven learners are as powerful as their respective Gold-style counterpart.

Surprisingly, for monotone behaviourally correct learners, such a equality does \emph{not} hold true, as we show with the next result. The idea is to construct a class of languages where the learner must keep track of the order the elements were presented in, in order to safely discard them at a later point in learning-time. To obtain this result, we apply the technique of self-learning classes, presented by \citet{CK16}, using the Operator Recursion Theorem, see \citet{Case74}. 

\begin{theorem}
  We have that $[\Txt\G\Mon\Ex] \setminus [\Txt\Psd\Mon\Bc] \neq \emptyset$. \label{Thm:CoolSep}
\end{theorem}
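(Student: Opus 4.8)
The plan is to construct a single \emph{self-learning} class $\La$, following the technique of \citet{CK16}, which is witnessed to lie in $[\Txt\G\Mon\Ex]$ by an explicit order-sensitive learner, and to diagonalise against every partially set-driven monotone behaviourally correct learner using the Operator Recursion Theorem \citep{Case74}. First I would invoke Theorem~\ref{Thm:DelTotal}: since $\Mon$ and $\Bc$ are both semantic (hence so is their conjunction) and $\Psd$ is an interaction operator, any $\Txt\Psd\Mon\Bc$-learner may be assumed total. This is what lets the diagonalisation treat the adversary learner $h'$ as a total computable oracle inside an \ORT construction.

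For the positive half I would design a Gold learner $h$ and set $\La = \Txt\G\Mon\Ex(h)$. The guiding idea, as for Theorem~\ref{thm:GSMon-PsdMon}, is that $h$ should \emph{speculatively} overshoot, conjecturing elements not yet presented, and then \emph{safely discard} the superfluous ones. The crucial point is that $h$, having full access to the text, reads off from the \emph{order} of presentation exactly when a speculative element may be retracted without ever dropping an element of the target; this keeps $h$ monotone on every text of every $L \in \La$ and makes it converge syntactically. The languages in $\La$ are made self-describing (their elements encode indices together with the relevant order data), so that checking $\La \subseteq [\Txt\G\Mon\Ex]$ reduces to verifying that $h$ behaves as intended on the prescribed texts.

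For the negative half I would adapt the diagonalisation of \citet[Thm.~4.5]{KS16} underlying Theorem~\ref{thm:GSMon-PsdMon}. Using \ORT, I would produce an index $e$ together with a computable family of stabilising sequences $(\sigma_i)_{i\in\N}$ with content inside $W_e$, and include enough singletons in $\La$ to force, for each relevant element $x$, that $x \in W_{(h')^*(x^{t(x)})}$. The engine is the partially set-driven identity $(h')^*(x^{j}\concat\sigma_k) = (h')^*(\sigma_k\concat x^{j})$ (equal content and length), combined with a stabilisation property of $\sigma_k$ and monotonicity along a text that does contain $x$: this transports the forced membership of $x$ to the $x$-free snapshot $\sigma_k$, overcoming the naive obstacle that an absent element cannot be forced into a conjecture. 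In the case that $W_e$ is finite I expect the two decision elements $\langle e,2k+1\rangle$ and $\langle e,2k+2\rangle$ to be forced simultaneously into the stabilised hypothesis at $\sigma_k$ while lying outside $L \coloneqq W_e$, so that $h'$ is \emph{semantically} incorrect on $L$ from $\sigma_k$ onwards, which is now a genuine $\Bc$-failure and not merely a loss of syntactic convergence.

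The main obstacle is the case in which $W_e$ is infinite. For the explanatory separation this case was discharged simply because $h'$ makes infinitely many mind changes, but that is perfectly compatible with $\Bc$, so the syntactic search driving the construction of the $\sigma_i$ must be replaced by one detecting \emph{semantic} stabilisation, which is not decidable and is precisely what forces the self-referential \ORT setup rather than a direct construction. The asymmetry I would exploit is that on an infinite target a behaviourally correct learner must eventually conjecture supersets containing not-yet-presented elements, and to remain monotone it must afterwards discard exactly the speculative \emph{non}-target elements; the order of presentation, visible to the Gold learner $h$ but invisible to the content-and-timestamp learner $h'$, is what certifies which discards are safe. I would arrange the \ORT fixed point so that, whatever $h'$ does, it is driven either to drop a genuine element of $L$ (violating $\Mon$ on some text of $L$) or never to discard a forced non-target element (violating $\Bc$ on $L$), while the Gold learner $h$, timing its speculations by the order, escapes both horns and hence learns $\La$. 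Guaranteeing that the adaptively, self-referentially built languages all remain inside $\La$ is the delicate bookkeeping this case demands.
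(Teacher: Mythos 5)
Your strategic diagnosis matches the paper's proof exactly: the stabilising-sequence construction behind Theorem~\ref{thm:GSMon-PsdMon} collapses in the infinite case because infinitely many syntactic mind changes are perfectly compatible with $\Bc$, and the correct replacement lever is that a $\Bc$-learner of an infinite language must at some point conjecture an element it has not yet been shown, after which monotonicity can be used to lock that commitment in, while the order-blindness of a $\Psd$-learner makes the lock-in fatal. Your appeal to Theorem~\ref{Thm:DelTotal} to assume $h'$ total is also legitimate (the paper instead uses a dovetailed ``first $i$ found'' search). The genuine gap is that your proposal stops exactly where the proof begins: the dichotomy you promise to ``arrange by an \ORT fixed point'' is never instantiated by any class of languages, and the only construction you do spell out (the sequences $\sigma_i$ with decision elements $\langle e,2k+1\rangle$ and $\langle e,2k+2\rangle$) is the one you yourself concede cannot handle the infinite case. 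The paper does not patch that construction; it discards it entirely.

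What is missing is the witness architecture. The paper takes the self-learning class $\La = \Txt\G\Mon\Ex(h)$ with $h(\sigma) = \varphi_{\max(\content(\sigma))}(\sigma)$ and uses \ORT to build three interlocking kinds of languages: pairwise disjoint \emph{infinite} languages $W_{e_j} = \range(a_j)$, which $h'$ must learn, so that $f(j)$, the first point with $a_j(f(j)) \in W_{h'(\content(a_j[f(j)]),\, f(j))}$, is well defined --- this is your forced overgeneralisation, but one per index $j$, not a single one; \emph{finite} trap languages $W_{\hat{e}_k} = \bigcup_{j' \leq k}\content(a_{j'}[f(j')]) \cup \{a_k(f(k))\}$, which make each commitment binding, since $\Mon$ on texts of $W_{\hat{e}_k}$ forbids $h'$ from ever dropping $a_k(f(k))$ at any (content, time) pair compatible with such a text; and the diagonal language $W_e = \bigcup_j \content(a_j[f(j)])$, containing no committed element. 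On the text $a_0[f(0)]\concat a_1[f(1)]\concat a_2[f(2)]\concat\cdots$ of $W_e$, the snapshot at time $\sum_{m\leq j} f(m)$ is simultaneously a snapshot of a text of $W_{\hat{e}_j}$, so the conjecture of $h'$ there contains $a_j(f(j)) \notin W_e$; as this happens for every $j$, that is a genuine $\Bc$-failure. Note that a single forced non-target element (your phrasing suggests one) would not suffice: each trap is finite, so it pins its committed element into only finitely many conjectures along the diagonal text, which is why infinitely many rounds are needed. Finally, the hardest part of the paper's proof is the half you treat as routine: verifying that the Gold-style learner $h$ itself remains monotone, which hinges on the careful case structure of $\varphi_{a_j(i)}$ (keying on $\first(\sigma)$ and on the maximal index of a range met) certifying that discarding $a_k(f(k))$ is safe precisely when a higher-indexed range appears later. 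Without an instantiation of this machinery, neither half of the theorem is established.
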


\begin{proof}
  We provide a class witnessing the separation using self-learning classes, as presented in \citet[Thm.~3.6]{CK16}. Consider the learner which for a finite sequence $\sigma$ is defined as 
  $$h(\sigma) = \begin{cases}
    \ind(\emptyset), &\falls \content(\sigma) = \emptyset, \\
    \varphi_{\max(\content(\sigma))}(\sigma), &\sonst
  \end{cases}$$
  Let $\La = \Txt\G\Mon\Ex(h)$. Assume there exists a $\Txt\Psd\Mon\Bc$-learner $h'$ which learns $\La$, that is, $\La \subseteq \Txt\Psd\Mon\Bc(h')$. By the Operator Recursion Theorem (\ORT), see \citet{Case74}, there exist a family of strictly monotone, total computable functions $(a_j)_{j \in \N}$ with pairwise disjoint range, a total computable function $f \in \totalCp$, an index $e \in \N$ and two families of indices $(e_j)_{j \in \N}, (\hat{e}_k)_{k \in \N}$ such that for all finite sequences~$\sigma$, where $\first(\sigma)$ is the first non-pause element in the sequence $\sigma$, we have
  \begin{align*}
    \varphi_{a_j(i)}(\sigma) &= \begin{cases}
    e_j, &\falls \content(\sigma) \subseteq \range(a_j), \\
    \hat{e}_k, &\sonstfalls \exists k \colon a_k(f(k)) \in \content(\sigma) \ \vee \\
    & \exists k \colon \first(\sigma) \in \range(a_k) \wedge \max \{ j \mid \content(\sigma) \cap \range(a_j) \neq \emptyset \} = k, \\
    e, &\sonst
    \end{cases} \\
    f(j) &= \text{first $i$ found such that } a_j(i) \in W_{h'(\content(a_j[i]), i)}, \\
    W_{e_j} &= \range(a_j), \\
    W_{\hat{e}_k} &= \bigcup_{j' \leq k} \content(a_{j'}[f(j')]) \cup \{ a_k(f(k)) \}, \\
    W_e &= \bigcup_j \content(a_j[f(j)]).
  \end{align*}
  
  Let $\La' = \{ W_{e_j} \mid j \in \N\} \cup \{ W_{\hat{e}_k} \mid k >0\} \cup \{ W_e \}$. A depiction of the class $\La'$ can be seen in Figure~\ref{Fig:HardestProof}. We show that $\La'$ can be learned by $h$, but not by $h'$, that is, $\La' \subseteq \La = \Txt\G\Mon\Ex(h)$ but also $\La' \not \subseteq \Txt\Psd\Mon\Bc(h')$. The intuition is the following. For some $j$, as long as only elements from $W_{e_j}$ are presented, $h$ will suggest $e_j$ as its hypothesis. Thus, $h'$ needs to learn $W_{e_j}$ as well and eventually overgeneralize, that is, at some point $i$ we have $\content(a_j[i]) \subsetneq W_{h'(\content(a_j[i]), i)}$. The function $f(j)$ finds such $i$. Once the overgeneralization happenes, we proceed by showing, for $j' \neq j$, elements from $\range(a_{j'})$. Knowing the order in which the elements were presented, the learner $h$ now either keeps or discards the element $a_j(f(j))$ in its next hypothesis depending whether $j'<j$ or $j<j'$, respectively. If $j' < j$, $h$ needs to keep $a_j(f(j))$ in its hypothesis as it still may be presented the set $W_{\hat{e}_j}$. Otherwise, it suggests the set $W_e$, only changing its mind if it sees, for appropriate $i \in \N$, an element of the form $a_i(f(i))$. Then, $h$ is certain to be presented $W_{\hat{e}_i}$. So the full-information learner $h$ can deal with this new information and preserve monotonicity, while $h'$ cannot, as it does not know which information came first. 

  \begin{figure}[h]
    \centering
    \def\CrossSize{2.5}
    \def\DotSize{0.9}
    \tikzset{every picture/.style={line width=0.75pt}}
    \tikzset{cross/.style={cross out, draw=black, minimum size=2*(#1-\pgflinewidth), inner sep=0pt, outer sep=0pt}, cross/.default={1pt}}       

    \begin{tikzpicture}[x=0.75pt,y=0.75pt,yscale=-1,xscale=1]

      \draw  [line width=1.5]  (150,76) .. controls (161.75,76.34) and (164.04,86.42) .. (164,96) .. controls (163.96,105.58) and (164.25,137.99) .. (164,146) .. controls (163.75,154.01) and (162.54,161.92) .. (158,166) .. controls (153.46,170.08) and (139.27,170.21) .. (134,170) .. controls (128.73,169.79) and (95.51,169.89) .. (86,170) .. controls (76.49,170.11) and (67.95,164.57) .. (68,154) .. controls (68.05,143.43) and (76.54,139.92) .. (86,140) .. controls (95.46,140.08) and (112.04,140.42) .. (122,140) .. controls (131.96,139.58) and (135.46,127.58) .. (136,120) .. controls (136.54,112.42) and (136.29,105.08) .. (136,96) .. controls (135.71,86.92) and (138.25,75.66) .. (150,76) -- cycle ;

      \draw   (448,126) .. controls (455.61,126.31) and (481.42,129.97) .. (482,154) .. controls (482.58,178.03) and (456.67,183.94) .. (450,184) .. controls (443.33,184.06) and (289.46,184.56) .. (244.5,183.97) .. controls (199.54,183.39) and (101.83,183.97) .. (94,183.97) .. controls (86.17,183.97) and (60.17,178.03) .. (60,154) .. controls (59.83,129.97) and (85.5,125.64) .. (94,126) .. controls (102.5,126.36) and (198.46,126.58) .. (244.5,126) .. controls (290.54,125.42) and (440.39,125.69) .. (448,126) -- cycle ;

      \draw  [dash pattern={on 4.5pt off 4.5pt}] (250,106) .. controls (256.73,106.04) and (260.21,111.92) .. (260,120) .. controls (259.79,128.08) and (259.75,142.91) .. (260,150) .. controls (260.25,157.09) and (260.82,165.94) .. (255,170) .. controls (249.18,174.06) and (245.79,173.58) .. (236,174) .. controls (226.21,174.42) and (174.6,173.96) .. (170,174) .. controls (165.4,174.04) and (94.21,173.92) .. (84,174) .. controls (73.79,174.08) and (63.95,164.57) .. (64,154) .. controls (64.05,143.43) and (76.42,136.17) .. (84,136) .. controls (91.58,135.83) and (160.29,135.58) .. (170,136) .. controls (179.71,136.42) and (223.71,136.42) .. (232,136) .. controls (240.29,135.58) and (240.29,127.58) .. (240,120) .. controls (239.71,112.42) and (243.27,105.96) .. (250,106) -- cycle ;
 
      \draw  [dash pattern={on 4.5pt off 4.5pt}] (350,106) .. controls (356.73,106.04) and (360.21,111.92) .. (360,120) .. controls (359.79,128.08) and (360.29,143.58) .. (360,150) .. controls (359.71,156.42) and (360.27,169.49) .. (354,174) .. controls (347.73,178.51) and (331.79,177.58) .. (322,178) .. controls (312.21,178.42) and (180.32,177.96) .. (175.71,178) .. controls (171.11,178.04) and (99.13,178.17) .. (85.71,178) .. controls (72.3,177.83) and (63.66,164.57) .. (63.71,154) .. controls (63.77,143.43) and (72.63,132.17) .. (85.71,132) .. controls (98.8,131.83) and (164.01,131.58) .. (173.71,132) .. controls (183.42,132.42) and (319.71,132.42) .. (328,132) .. controls (336.29,131.58) and (340.29,127.58) .. (340,120) .. controls (339.71,112.42) and (343.27,105.96) .. (350,106) -- cycle ;

      \draw  [line width=1.5]  (250,76) .. controls (261.75,76.34) and (264.04,86.42) .. (264,96) .. controls (263.96,105.58) and (264.25,137.99) .. (264,146) .. controls (263.75,154.01) and (262.54,161.92) .. (258,166) .. controls (253.46,170.08) and (239.27,170.21) .. (234,170) .. controls (228.73,169.79) and (195.51,169.89) .. (186,170) .. controls (176.49,170.11) and (167.95,164.57) .. (168,154) .. controls (168.05,143.43) and (176.54,139.92) .. (186,140) .. controls (195.46,140.08) and (212.04,140.42) .. (222,140) .. controls (231.96,139.58) and (235.46,127.58) .. (236,120) .. controls (236.54,112.42) and (236.29,105.08) .. (236,96) .. controls (235.71,86.92) and (238.25,75.66) .. (250,76) -- cycle ;

      \draw  [line width=1.5]  (350,76) .. controls (361.75,76.34) and (364.04,86.42) .. (364,96) .. controls (363.96,105.58) and (364.25,137.99) .. (364,146) .. controls (363.75,154.01) and (362.54,161.92) .. (358,166) .. controls (353.46,170.08) and (339.27,170.21) .. (334,170) .. controls (328.73,169.79) and (295.51,169.89) .. (286,170) .. controls (276.49,170.11) and (267.95,164.57) .. (268,154) .. controls (268.05,143.43) and (276.54,139.92) .. (286,140) .. controls (295.46,140.08) and (312.04,140.42) .. (322,140) .. controls (331.96,139.58) and (335.46,127.58) .. (336,120) .. controls (336.54,112.42) and (336.29,105.08) .. (336,96) .. controls (335.71,86.92) and (338.25,75.66) .. (350,76) -- cycle ;

      \draw (161,69.4) node [anchor=north west][inner sep=0.75pt]    {$W_{e_{0}}$};
      \draw (261,69.4) node [anchor=north west][inner sep=0.75pt]    {$W_{e_{1}}$};
      \draw (361,69.4) node [anchor=north west][inner sep=0.75pt]    {$W_{e_{2}}$};
      \draw (476,120.4) node [anchor=north west][inner sep=0.75pt]    {$W_{e}$};
      \draw (200,185.4) node [anchor=north west][inner sep=0.75pt]    {$W_{\hat{e}_{1}}$};
      \draw (300,185.4) node [anchor=north west][inner sep=0.75pt]    {$W_{\hat{e}_{2}}$};

      \node at (150,116) [cross=\CrossSize pt] {};
      \node at (250,116) [cross=\CrossSize pt] {};
      \node at (350,116) [cross=\CrossSize pt] {};

      \node at (80, 154) [circle,fill,inner sep=\DotSize pt]{};
      \node at (95, 154) [circle,fill,inner sep=\DotSize pt]{};
      \node at (110, 154) [circle,fill,inner sep=\DotSize pt]{};
      \node at (125, 154) [circle,fill,inner sep=\DotSize pt]{};
      \node at (140, 150) [circle,fill,inner sep=\DotSize pt]{};
      \node at (147, 140) [circle,fill,inner sep=\DotSize pt]{};

      \node at (150, 102) [circle,fill,inner sep=\DotSize pt]{};
      \node at (150, 94) [circle,fill,inner sep=\DotSize pt]{};
      \node at (150, 86) [circle,fill,inner sep=\DotSize pt]{};

      \node at (180, 154) [circle,fill,inner sep=\DotSize pt]{};
      \node at (195, 154) [circle,fill,inner sep=\DotSize pt]{};
      \node at (210, 154) [circle,fill,inner sep=\DotSize pt]{};
      \node at (225, 154) [circle,fill,inner sep=\DotSize pt]{};
      \node at (240, 150) [circle,fill,inner sep=\DotSize pt]{};
      \node at (247, 140) [circle,fill,inner sep=\DotSize pt]{};

      \node at (250, 102) [circle,fill,inner sep=\DotSize pt]{};
      \node at (250, 94) [circle,fill,inner sep=\DotSize pt]{};
      \node at (250, 86) [circle,fill,inner sep=\DotSize pt]{};

      \node at (280, 154) [circle,fill,inner sep=\DotSize pt]{};
      \node at (295, 154) [circle,fill,inner sep=\DotSize pt]{};
      \node at (310, 154) [circle,fill,inner sep=\DotSize pt]{};
      \node at (325, 154) [circle,fill,inner sep=\DotSize pt]{};
      \node at (340, 150) [circle,fill,inner sep=\DotSize pt]{};
      \node at (347, 140) [circle,fill,inner sep=\DotSize pt]{};

      \node at (350, 102) [circle,fill,inner sep=\DotSize pt]{};
      \node at (350, 94) [circle,fill,inner sep=\DotSize pt]{};
      \node at (350, 86) [circle,fill,inner sep=\DotSize pt]{};

      \draw    (123,104.57) .. controls (133.98,101.96) and (137.78,102.93) .. (144.81,109.77) ;
      \draw [shift={(146.86,111.83)}, rotate = 225.87] [fill={rgb, 255:red, 0; green, 0; blue, 0 }  ][line width=0.08]  [draw opacity=0] (5.36,-2.57) -- (0,0) -- (5.36,2.57) -- (3.56,0) -- cycle    ;
      \node at (96, 107) {\footnotesize{$a_0(f(0))$}};

      \draw    (223,104.57) .. controls (233.98,101.96) and (237.78,102.93) .. (244.81,109.77) ;
      \draw [shift={(246.86,111.83)}, rotate = 225.87] [fill={rgb, 255:red, 0; green, 0; blue, 0 }  ][line width=0.08]  [draw opacity=0] (5.36,-2.57) -- (0,0) -- (5.36,2.57) -- (3.56,0) -- cycle    ;
      \node at (196, 107) {\footnotesize{$a_1(f(1))$}};

      \draw    (323,104.57) .. controls (333.98,101.96) and (337.78,102.93) .. (344.81,109.77) ;
      \draw [shift={(346.86,111.83)}, rotate = 225.87] [fill={rgb, 255:red, 0; green, 0; blue, 0 }  ][line width=0.08]  [draw opacity=0] (5.36,-2.57) -- (0,0) -- (5.36,2.57) -- (3.56,0) -- cycle    ;
      \node at (296, 107) {\footnotesize{$a_2(f(2))$}};

    \end{tikzpicture}
    \caption{A depiction of the class $\La'$. Given $j$, the dashed line depicts the set $W_{\hat{e}_j}$ and the cross indicates the element $a_j(f(j))$.}\label{Fig:HardestProof}
  \end{figure}

  We proceed with the formal proof that $h$ $\Txt\G\Mon\Ex$-learns $\La'$. Let $L' \in \La'$ and $T' \in \Txt(L')$. We first show the $\Ex$-convergence and the monotonicity afterwards. For the former, we distinguish the following cases.
  \begin{enumerate}
    \itemin{1. Case:} For some $j$, we have $L' = W_{e_j}$. Let $n_0$ such that $\content(T[n]) \neq \emptyset$. Then, for $n \geq n_0$, there exists some $i$ such that $a_j(i) = \max(\content(T[n]))$. Thus, 
    $$h(T[n]) = \varphi_{\max(\content(T[n]))}(T[n]) = \varphi_{a_j(i)}(T[n]) = e_j.$$
    Hence, $h$ learns $W_{e_j}$ correctly.
    \itemin{2. Case:} We have $L' = W_e$. Let $n_0, k_0 \in \N$, with $n_0$ minimal, such that $\content(T[n_0]) \neq \emptyset$ and $\first(T[n_0]) \in \range(a_{k_0})$. Let $n_1 \geq n_0$ be minimal such that there exists $k> k_0$ such that $\content(T[n_1])$ also contains elements from $\content(a_k)$. Then, for $n > n_1$ we have that $h(T[n]) = e$, as there exists no $j$ with $a_j(f(j)) \in \content(T)$ and also $\max \{ j \mid \content(T[n]) \cap \range(a_j) \neq \emptyset \} \neq k_0$. Thus, $h$ learns $W_{e}$ correctly.
    \itemin{3. Case:} For some $k>0$ we have $L' = W_{\hat{e}_k}$. In this case, there exists $n_0$ with $a_k(f(k)) \in \content(T[n_0])$. Then, for $n \geq n_0$, we have $h(T[n]) = \hat{e}_k$. Therefore, $h$ learns $W_{\hat{e}_k}$ correctly.
  \end{enumerate}
  We show that the learning is monotone. Let $n \in \N$. As long as $\content(T[n])$ is empty, $h$ returns $\ind(\emptyset)$. Once $\content(T[n])$ is not empty anymore and as long as $\content(T[n])$ only contains elements from, for some $j$, $\range(a_j)$, the learner $h$ outputs (a code for) the set $W_{e_j}$. Note that $j$ is the index of the element $\first(T[n])$, that is, $\first(T[n]) \in \range(a_j)$. If ever, for some later $n$, $\content(T[n]) \setminus \range(a_j) \neq \emptyset$, then $h$ only changes its mind if there exists $k > j$ such that $\content(T[n]) \cap \range(a_k) \neq \emptyset$. Depending on whether $a_k(f(k)) \in \content(T[n])$ or not, $h$ changes its mind to (a code of) either $W_{\hat{e}_k}$ or $W_e$, respectively. In the former case, the learner $h$ is surely presented the set $W_{\hat{e}_k}$, making this mind change monotone. In the latter, no element of $W_{e_j} \setminus \content(a_j(f(j)))$ is contained the target language. This are exactly the elements $h$ discards from its hypothesis, keeping a monotone behaviour. The learner only changes its mind again if it witnesses, for some $k' \geq k$, the element $a_{k'}(f(k'))$. It will then output (a code of) the set $W_{\hat{e}_{k'}}$. This is, again, monotonic behaviour, as $h$ is sure to be presented the set $W_{\hat{e}_{k'}}$. Altogether, $h$ is monotone on any text of $L$.

  Thus, $h$ identifies all languages in $\La'$ correctly. Now, we show that $h'$ cannot do so too. We do so by providing a text of $W_e$ where $h'$ makes infinitely many wrong guesses. To that end, consider the text $T$ of $W_e$ given as
  \[
    a_0[f(0)] a_1[f(1)] a_2[f(2)]\dots
  \]
  For $j>0$, since $a_j(f(j)) \in W_{h'(\content(a_j[f(j)]), f(j))}$, we have 
  \begin{align*}
    a_j(f(j)) \in W_{h'(\content(T[\sum_{m \leq j} f(m)]), \sum_{m \leq j} f(m))},
  \end{align*}
  as $T[\sum_{m \leq j} f(m)]$ is a initial sequence for a text for $W_{\hat{e}_j}$. But, since $a_j(f(j)) \notin W_e$, $h'$ makes infinitely many incorrect conjectures and thus does not identify $W_e$ on the text $T$ correctly, a contradiction. 
\end{proof}

Together with the results of \citet{KSS17} that monotone $\Bc$-learners may be assumed total and that partially set-driven monotone (explanatory) learners are more powerful than set-driven behaviourally correct ones, we completed the extension of the results of \citet{KS16}, see Figure~\ref{FigSMon}. We depict these results in Figure~\ref{FigMonBc}.

\section{Conclusion and Future Work} \label{Sec:Concl}

When given a learning task, monotonic learners display different behaviour depending on the particular setting. Building on the studies of \citet{KS16}, who unveil a peculiar behaviour of strongly monotone learners under various additional constraints when learning arbitrary classes of languages, we show the similarities and differences when considering monotone learners. Besides memory restrictions, we impose requirements, such as totality, on the learners themselves. The most notable similarity is that globally monotone learners are, in fact, also globally strongly monotone. Besides that, both learning types show a similar overall-picture when requiring syntactic convergence. However, the results and, thus, the picture drastically changes when requiring semantic convergence. We show that monotone behaviourally correct learners only achieve their full learning power when having full information to infer their guesses from, that is, partially set-driven monotone behaviourally correct learners are strictly less powerful than their Gold-style counterpart. For behaviourally correct learners, this is a novelty.

The desire to discover more such novelties strengthens the need to further investigate monotonic restrictions. In particular, obtaining an overview of the situation regarding \emph{weakly monotone} learners \citep{Jantke91,Wiehagen91}, which need to be strongly monotone while being consistent, can be considered the next natural step.

\acks{%
  This work was supported by DFG Grant Number KO 4635/1-1. 
}

\bibliography{LTbib}

\begin{thebibliography}{23}
\providecommand{\natexlab}[1]{#1}
\providecommand{\url}[1]{\texttt{#1}}
\expandafter\ifx\csname urlstyle\endcsname\relax
  \providecommand{\doi}[1]{doi: #1}\else
  \providecommand{\doi}{doi: \begingroup \urlstyle{rm}\Url}\fi

\bibitem[Angluin(1980)]{Angluin80}
Dana Angluin.
\newblock Inductive inference of formal languages from positive data.
\newblock \emph{Information and Control}, 45:\penalty0 117--135, 1980.

\bibitem[Baliga et~al.(2008)Baliga, Case, Merkle, Stephan, and
  Wiehagen]{BCMSW08}
Ganesh Baliga, John Case, Wolfgang Merkle, Frank Stephan, and Rolf Wiehagen.
\newblock When unlearning helps.
\newblock \emph{Information and Computation}, 206:\penalty0 694--709, 2008.

\bibitem[Blum and Blum(1975)]{BlumBlum75}
Lenore Blum and Manuel Blum.
\newblock Toward a mathematical theory of inductive inference.
\newblock \emph{Information and Control}, 28:\penalty0 125--155, 1975.

\bibitem[Case(1974)]{Case74}
John Case.
\newblock Periodicity in generations of automata.
\newblock \emph{Mathematical Systems Theory}, 8:\penalty0 15--32, 1974.

\bibitem[Case and K{\"{o}}tzing(2016)]{CK16}
John Case and Timo K{\"{o}}tzing.
\newblock Strongly non-{U}-shaped language learning results by general
  techniques.
\newblock \emph{Information and Computation}, 251:\penalty0 1--15, 2016.

\bibitem[Case and Lynes(1982)]{CL82}
John Case and Christopher Lynes.
\newblock Machine inductive inference and language identification.
\newblock In \emph{Proc. of the International Colloquium on Automata, Languages
  and Programming (ICALP)}, pages 107--115, 1982.

\bibitem[Doskoč and Kötzing(2020)]{DoskocK20}
Vanja Doskoč and Timo Kötzing.
\newblock Cautious limit learning.
\newblock In \emph{Proc. of the International Conference on Algorithmic
  Learning Theory (ALT)}, 2020.

\bibitem[Fulk(1990)]{Fulk90}
Mark~A. Fulk.
\newblock Prudence and other conditions on formal language learning.
\newblock \emph{Information and Computation}, 85:\penalty0 1--11, 1990.

\bibitem[Gold(1967)]{Gold67}
E.~Mark Gold.
\newblock Language identification in the limit.
\newblock \emph{Information and Control}, 10:\penalty0 447--474, 1967.

\bibitem[Jantke(1991)]{Jantke91}
Klaus Jantke.
\newblock Monotonic and non-monotonic inductive inference.
\newblock \emph{New Generation Computing}, 8:\penalty0 349--360, 1991.

\bibitem[K{\"{o}}tzing and Palenta(2016)]{KP16}
Timo K{\"{o}}tzing and Raphaela Palenta.
\newblock A map of update constraints in inductive inference.
\newblock \emph{Theoretical Computer Science}, 650:\penalty0 4--24, 2016.

\bibitem[K{\"{o}}tzing and Schirneck(2016)]{KS16}
Timo K{\"{o}}tzing and Martin Schirneck.
\newblock Towards an atlas of computational learning theory.
\newblock In \emph{Proc. of the Symposium on Theoretical Aspects of Computer
  Science ({STACS})}, pages 47:1--47:13, 2016.

\bibitem[K{\"{o}}tzing et~al.(2017)K{\"{o}}tzing, Schirneck, and Seidel]{KSS17}
Timo K{\"{o}}tzing, Martin Schirneck, and Karen Seidel.
\newblock Normal forms in semantic language identification.
\newblock In \emph{Proc. of the International Conference on Algorithmic
  Learning Theory (ALT)}, pages 76:493--76:516, 2017.

\bibitem[Kötzing(2009)]{Kotzing09}
Timo Kötzing.
\newblock \emph{Abstraction and Complexity in Computational Learning in the
  Limit}.
\newblock PhD thesis, University of Delaware, 2009.

\bibitem[Lange and Zeugmann(1993)]{LZ93}
Steffen Lange and Thomas Zeugmann.
\newblock Monotonic versus non-monotonic language learning.
\newblock In \emph{Nonmonotonic and Inductive Logic}, pages 254--269, 1993.

\bibitem[Lange et~al.(1996)Lange, Zeugmann, and Kapur]{LZK96}
Steffen Lange, Thomas Zeugmann, and Shyam Kapur.
\newblock Monotonic and dual monotonic language learning.
\newblock \emph{Theor. Comput. Sci.}, 155:\penalty0 365--410, 1996.

\bibitem[Marcus et~al.(1992)Marcus, Pinker, Ullman, Hollander, Rosen, Xu, and
  Clahsen]{PsychUShape}
Gary~F. Marcus, Steven Pinker, Michael Ullman, Michelle Hollander, T.~John
  Rosen, Fei Xu, and Harald Clahsen.
\newblock Overregularization in language acquisition.
\newblock \emph{Monographs of the Society for Research in Child Development},
  57\penalty0 (4):\penalty0 i--178, 1992.

\bibitem[Osherson and Weinstein(1982)]{OW82}
Daniel~N. Osherson and Scott Weinstein.
\newblock Criteria of language learning.
\newblock \emph{Information and Control}, 52:\penalty0 123--138, 1982.

\bibitem[Osherson et~al.(1982)Osherson, Stob, and Weinstein]{OSW82}
Daniel~N. Osherson, Michael Stob, and Scott Weinstein.
\newblock Learning strategies.
\newblock \emph{Information and Control}, 53:\penalty0 32--51, 1982.

\bibitem[Rogers~Jr.(1987)]{Rogers87}
Hartley Rogers~Jr.
\newblock \emph{Theory of recursive functions and effective computability}.
\newblock Reprinted by {MIT} Press, Cambridge (MA), 1987.

\bibitem[Sch{\"{a}}fer{-}Richter(1984)]{SchRicht84}
Gisela Sch{\"{a}}fer{-}Richter.
\newblock \emph{{\"{U}}ber Eingabeabh{\"{a}}ngigkeit und Komplexit{\"{a}}t von
  Inferenzstrategien}.
\newblock PhD thesis, {RWTH} Aachen University, Germany, 1984.

\bibitem[Wexler and Culicover(1980)]{WC80}
Kenneth Wexler and Peter~W. Culicover.
\newblock Formal principles of language acquisition.
\newblock \emph{MIT Press, Cambridge (MA)}, 1980.

\bibitem[Wiehagen(1991)]{Wiehagen91}
Rolf Wiehagen.
\newblock A thesis in inductive inference.
\newblock In \emph{Nonmonotonic and Inductive Logic}, pages 184--207, 1991.

\end{thebibliography}

\end{document}